\newcommand{\doubleEdges}[3]{
    \begin{pgfonlayer}{background}
        \path
        let \p1=($(#1)-(#2)$), \n1={atan2(\y1,\x1)}, \n2={180+\n1} in
        ($ (#1.\n1)!#3!90:(#2.\n2) $) edge[semithick,black] ($ (#2.\n2)!#3!-90:(#1.\n1) $)
        ($ (#2.\n2)!#3!90:(#1.\n1) $) edge[semithick,black] ($ (#1.\n1)!#3!-90:(#2.\n2) $);
    \end{pgfonlayer} 
    }
\definecolor{green}{HTML}{3DA940}
\definecolor{orange}{HTML}{FB883B}
\definecolor{blue}{HTML}{15A6D7}
\definecolor{darkorange}{HTML}{BF704B}
\definecolor{darkgreen}{HTML}{125808}
\definecolor{gray}{HTML}{EFEFEF}
\tikzset{node/.style={circle, draw=black, semithick,                  fill=green, minimum size= 8pt, inner sep=5pt},
         edge/.style={semithick, black}}
\theoremstyle{definition}
\newtheorem{theorem}{Theorem}
\newtheorem{proposition}[theorem]{Proposition}
\newtheorem{corollary}[theorem]{Corollary}
\def\mA{{\bm{A}}}
\def\mD{{\bm{D}}}
\def\mE{{\bm{E}}}
\def\mG{{\bm{G}}}
\def\mI{{\bm{I}}}
\def\mM{{\bm{M}}}
\def\mP{{\bm{P}}}
\def\mQ{{\bm{Q}}}
\def\mR{{\bm{R}}}
\def\mX{{\bm{X}}}
\def\va{{\bm{a}}}
\def\ve{{\bm{e}}}
\def\vm{{\bm{m}}}
\def\vx{{\bm{x}}}
\def\vy{{\bm{y}}}
\def\vz{{\bm{z}}}
\DeclareMathAlphabet{\mathsfit}{\encodingdefault}{\sfdefault}{m}{sl}
\SetMathAlphabet{\mathsfit}{bold}{\encodingdefault}{\sfdefault}{bx}{n}
\newcommand{\tens}[1]{\bm{\mathsfit{#1}}}
\def\tE{{\tens{E}}}
\newcommand{\Nb}{\mathbb{N}}
\newcommand{\new}[1]{\emph{#1}}
\DeclarePairedDelimiter{\norm}{\lVert}{\rVert}
\renewcommand{\vec}[1]{\mathbold{#1}}
\newcommand{\formulti}[2]{q_{#1 \mid #2}(\tilde \vz \mid \vz)}
\newcommand{\foruni}[2]{q_{#1 \mid #2}(\tilde z_d \mid z_d)}
\newcommand{\tr}{\operatorname{tr}}
\newcommand{\diag}{\operatorname{diag}}
\newcommand{\spm}[1]{{\scriptstyle \pm {#1}}}
\DeclareMathOperator*{\argmax}{argmax}
\newcommand{\xhdr}[1]{{\noindent\bfseries #1}}
\title{Cometh: A continuous-time discrete-state graph diffusion model {\twemoji{comet}}}
\author{%
  Antoine Siraudin\\
  RWTH Aachen University\\
  \texttt{antoine.siraudin@log.rwth-aachen.de}\\
  \And
  Fragkiskos D. Malliaros \\
  Université Paris-Saclay \\ CentraleSupélec, Inria
  \And
  Christopher Morris \\
  RWTH Aachen University
}
\begin{document}

\maketitle

\begin{abstract}
Discrete-state denoising diffusion models led to state-of-the-art performance in \emph{graph generation}, especially in the molecular domain. Recently, they have been transposed to continuous time, allowing more flexibility in the reverse process and a better trade-off between sampling efficiency and quality, though they have not yet been applied to graphs. Here, to leverage the benefits of both approaches, we propose \textsc{Cometh}, a \underline{co}ntinuous-ti\underline{m}e discr\underline{e}te-s\underline{t}ate grap\underline{h} diffusion model, tailored to the specificities of graph data. In addition, we also successfully replaced the set of structural features previously used in discrete graph diffusion models with a single random-walk-based encoding, providing a simple and principled way to boost the model's expressive power. Empirically, we show that integrating continuous time leads to significant improvements across various metrics over state-of-the-art discrete-state diffusion models on a large set of molecular and non-molecular benchmark datasets. In terms of valid, unique, and novel (VUN) samples, \textsc{Cometh} obtains a near-perfect performance of $99.5\%$ on the planar graph dataset and outperforms \textsc{DiGress} by $12.6\%$ on the large GuacaMol dataset.
\end{abstract}

\section{Introduction}
\new{Denoising diffusion models}~\citep{Ho+2020,song2020score} are among the most prominent and successful classes of generative models. Intuitively, these models aim to denoise diffusion trajectories and produce new samples by sampling noise and recursively denoising it, often outperforming competing architectures in tasks such as image and video generation~\citep{sohl2015deep,yang2023diffusion}. Recently, a large set of works, e.g.,~\cite{chen2023efficient, jo2022score, jo2024graph, vignac2022digress}, aimed to leverage diffusion models for \new{graph generation}, e.g., the generation of molecular structures. One class of such models embeds the graphs into a \emph{continuous space} and adds Gaussian noise to the node features and graph adjacency matrix~\citep{jo2022score}. However, such noise destroys the graph’s sparsity, resulting in complete, noisy graphs without meaningful structural information, making it difficult for the denoising network to capture the structural properties of the data. Therefore, \emph{discrete-state} graph diffusion models such as \textsc{DiGress}~\citep{vignac2022digress} have been proposed, exhibiting competitive performance against their continuous-state counterparts. These models utilize a categorical corruption process~\citep{austin2021structured}, making them more suited to the discrete structure of graph data.

In parallel, the above Gaussian-noise-based diffusion models have been extended to \new{continuous time}~\citep{song2020score}, i.e., relying on a continuous-time stochastic process~\citep{capasso2021introduction}, by formulating the forward process as a stochastic differential equation. In addition, discrete-state diffusion models have recently been transposed to continuous time~\citep{campbell2022continuous,sun2022score}, relying on \new{continuous-time Markov chains} (CTMC). Unlike their discrete-time counterparts, which define a fixed time scale during training, they allow training using a continuous-time scale and leave the choice of the time discretization strategy for the sampling stage. Hence, incorporating continuous time enables a more optimal balance between sampling efficiency and quality while providing greater flexibility in designing the reverse process, as various CTMC simulation tools can be utilized~\cite {campbell2022continuous, sun2022score}. However, extending continuous-time discrete-state diffusion models to graphs is not straightforward. Unlike other discrete data, such as text, where all tokens share the same support, nodes and edges in graphs have distinct attributes and must be handled separately. Furthermore, the noise models used for other data modalities may be suboptimal for graphs~\citep{vignac2022digress}. 

\xhdr{Present work} Hence, to leverage the benefits of both approaches for graph graph generation, i.e., discrete-state and continuous-time, we propose \textsc{Cometh}, a \emph{\underline{co}ntinuous-ti\underline{m}e discr\underline{e}te-s\underline{t}ate grap\underline{h}} diffusion model, integrating graph data into a continuous diffusion model framework; see~\Cref{fig:overview} for an overview of \textsc{Cometh}. Concretely, we 
\begin{enumerate}
\item propose a new noise model adapted to graph specificities, featuring distinct noising processes for nodes and edges, and we extend the marginal transitions previously proposed for graph data to the continuous-time setting. 

\item In addition, we successfully replace the set of structural features used in most previous discrete graph diffusion models with a random-walk-based encoding. We prove that it generalizes most of the features used in \textsc{DiGress}, hence representing a simple and elegant way to boost the model's expressivity and reach state-of-the-art performance.
\item Empirically, we show that integrating continuous-time into a discrete-state graph diffusion model leads to state-of-the-art results on synthetic and established molecular benchmark datasets across various metrics. For example, in terms of VUN samples, \textsc{Cometh} obtains a near-perfect performance of $99.5\%$ on the planar graph dataset and outperforms \textsc{DiGress} by $12.6\%$ on the large GuacaMol dataset.
\end{enumerate}

\emph{Overall, \textsc{Cometh} is the first graph diffusion model allowing the benefits of using a discrete-state space and the flexibility of a continuous-time scale in the design of the sampling algorithm.}

\xhdr{Related work} Diffusion models are a prominent class of generative models successfully applied to many data modalities, such as images, videos, or point clouds \citep{yang2023diffusion}.

Graph generation is a well-studied task applied to various application domains, such as molecule generation, floorplan generation, or abstract syntax tree generation \citep{shabani2023housediffusion,shi2019graphaf}. We can roughly categorize graph generation approaches into \emph{auto-regressive models} such as~\citet{ kong2023autoregressive,you2018graphrnn, pard, jang2023simple}  and \emph{one-shot models} such as diffusion models. The main advantage of one-shot models over auto-regressive ones is that they generate the whole graph in a single step and do not require any complex procedure to select a node ordering. On the other hand, auto-regressive models are more flexible regarding the size of the generated graph, which can remain unknown beforehand, and they do not suffer from the quadratic complexity of one-shot models.

While the first diffusion models for graph generation leveraged continuous-state spaces \citep{niu2020permutation}, they are now largely replaced by discrete-state diffusion models~\citep{ haefeli2022diffusion,vignac2022digress}, using a discrete-time scale. However, using discrete-time constrains the sampling scheme to a particular form called \emph{ancestral sampling}, which prevents the exploration of alternative sampling strategies that could optimize sampling time or enhance sampling quality.

Another line of research considers lifting the graphs into a continuous-state space and applying Gaussian noise to the node and edge features matrices~\citep{niu2020permutation, jo2022score, jo2024graph}. Such continuous noise allows the generation of continuous features to be handled smoothly, such as the generation of atomic coordinates in molecular graphs~\citep{jo2024graph}. The above Gaussian-noise-based diffusion models have many successful applications in computational biology (\cite{corso2023diffdock, yim2023se}). However, they almost exclusively consider point-cloud generation, focusing on modeling the geometry of the molecules and ignoring structural information. In addition, some hybrid approaches also exist that consider jointly modeling the 2D molecular graphs and their 3D geometry~\citep{hua2024mudiff, le2023navigating,vignac2023midi}. These models usually rely on continuous noise for the atomic coordinates and discrete noise for the atom and edge types.

Recent works have tried to scale graph generative models in size ~\citep{bergmeister2023efficient, luo2023fast, qin2023sparse}. Such frameworks are often built on top of previously proposed approaches, e.g., \textsc{SparseDiff}~\citep{qin2023sparse} is based on \textsc{DiGress}. Therefore, these scaling methods are likely to apply to our approach.

\begin{figure}[t!]
    \centering
    \tikzset{node/.style={circle, draw=black, semithick,                  fill=green, minimum size= 8pt, inner sep=5pt},
         edge/.style={semithick, black}}

\tikzset{node/.style={circle, draw=black, semithick,                  fill=green, minimum size= 8pt, inner sep=5pt},
         edge/.style={semithick, black}}

\begin{tikzpicture}[transform shape, scale=0.25]
    \clip (-1, 1) rectangle (39, -15);

    \node[node] (n1_1) at (0, -1.5) {};
    \node[node] (n1_2) at (1, -2.5) {};
    \node[node] (n1_3) at (2, -1.5) {};
    \node[node] (n1_4) at (3, -2.5) {};
    \node[node] (n1_5) at (4, -1.5) {};
    \node[node, fill=orange] (n1_6) at (2, 0) {};
    \node[node,fill=orange, draw=black] (n1_7) at (0, -3.5) {};

    \node[node] (n2_1) at (5, -1.5) {};
    \node[node] (n2_2) at (6, -2.5) {};
    \node[node] (n2_3) at (7, -1.5) {};
    \node[node] (n2_4) at (8, -2.5) {};
    \node[node, fill=blue] (n2_5) at (9, -1.5) {};
    \node[node, fill=orange] (n2_6) at (7, 0) {};
    \node[node,fill=orange, draw=black] (n2_7) at (5, -3.5) {};

    \node[node] (n3_1) at (10, -1.5) {};
    \node[node] (n3_2) at (11, -2.5) {};
    \node[node] (n3_3) at (12, -1.5) {};
    \node[node] (n3_4) at (13, -2.5) {};
    \node[node, fill=blue] (n3_5) at (14, -1.5) {};
    \node[node, fill=orange] (n3_6) at (11.5, 0) {};
    \node[node,fill=orange,draw=black] (n3_7) at (10, -3.5) {};

    \node[node,fill=blue] (n4_1) at (15, -1.5) {};
    \node[node] (n4_2) at (16, -2.5) {};
    \node[node, fill=orange] (n4_3) at (17, -1.5) {};
    \node[node] (n4_4) at (18, -2.5) {};
    \node[node] (n4_5) at (19, -1.5) {};
    \node[node, fill=orange] (n4_6) at (17, 0) {};
    \node[node,fill=green,draw=black] (n4_7) at (15, -3.5) {};

    \node[node] (n5_1) at (21.5, -1.5) {};
    \node[node,fill=blue] (n5_2) at (22.5, -2.5) {};
    \node[node,fill=orange] (n5_3) at (23.5, -1.5) {};
    \node[node] (n5_4) at (24.5, -2.5) {};
    \node[node] (n5_5) at (25.5, -1.5) {};
    \node[node, fill=orange] (n5_6) at (23.5, 0) {};
    \node[node,fill=green,draw=black] (n5_7) at (21.5, -3.5) {};

    \node[node] (n6_1) at (27.5, -1.5) {};
    \node[node,fill=blue] (n6_2) at (28.5, -2.5) {};
    \node[node,fill=orange] (n6_3) at (29.5, -1.5) {};
    \node[node] (n6_4) at (30.5, -2.5) {};
    \node[node] (n6_5) at (31.5, -1.5) {};
    \node[node, fill=orange] (n6_6) at (29.5, 0) {};
    \node[node,fill=green,draw=black] (n6_7) at (27.5, -3.5) {};

    \node[node,fill=orange] (n7_1) at (33.5, -1.5) {};
    \node[node] (n7_2) at (34.5, -2.5) {};
    \node[node] (n7_3) at (35.5, -1.5) {};
    \node[node,fill=blue] (n7_4) at (36.5, -2.5) {};
    \node[node] (n7_5) at (37.5, -1.5) {};
    \node[node, fill=orange] (n7_6) at (35.5, 0) {};
    \node[node,fill=green,draw=black] (n7_7) at (33.5, -3.5) {};

    \node[node] (n8_1) at (0, -8.8) {};
    \node[node] (n8_2) at (1, -9.8) {};
    \node[node] (n8_3) at (2, -8.8) {};
    \node[node] (n8_4) at (3, -9.8) {};
    \node[node] (n8_5) at (4, -8.8) {};
    \node[node, fill=orange] (n8_6) at (2, -7.3) {};
    \node[node,fill=orange] (n8_7) at (0, -10.8) {};

    \node[node] (n9_1) at (10, -8.8) {};
    \node[node] (n9_2) at (11, -9.8) {};
    \node[node] (n9_3) at (12, -8.8) {};
    \node[node,fill=orange] (n9_4) at (13, -9.8) {};
    \node[node] (n9_5) at (14, -8.8) {};
    \node[node, fill=blue] (n9_6) at (11.5, -7.3) {};
    \node[node] (n9_7) at (10, -10.8) {};


    \draw[edge] (n1_1) -- (n1_2);
    \draw[edge] (n1_2) -- (n1_3);
    \draw[edge] (n1_3) -- (n1_4);
    \draw[edge] (n1_4) -- (n1_5);
    \draw[edge] (n1_3) -- (n1_6);
    \doubleEdges{n1_2}{n1_7}{3pt}

    \draw[edge] (n2_1) -- (n2_2);
    \draw[edge] (n2_2) -- (n2_3);
    \draw[edge] (n2_3) -- (n2_4);
    \draw[edge] (n2_4) -- (n2_5);
    \draw[edge] (n2_3) -- (n2_6);
    \draw[edge] (n2_2) -- (n2_7);

    \draw[edge] (n3_1) -- (n3_2);
    \draw[edge] (n3_2) -- (n3_3);
    \draw[edge] (n3_3) -- (n3_4);
    \draw[edge] (n3_4) -- (n3_5);
    \draw[edge] (n3_1) -- (n3_6);
    \draw[edge] (n3_2) -- (n3_7);

    \draw[edge] (n4_1) -- (n4_2);
    \draw[edge] (n4_1) -- (n4_7);
    \doubleEdges{n4_1}{n4_3}{3pt}
    \draw[edge] (n4_3) -- (n4_6);
    \draw[edge] (n4_3) -- (n4_4);
    \draw[edge] (n4_5) -- (n4_6);

    \draw[edge] (n5_1) -- (n5_2);
    \doubleEdges{n5_1}{n5_3}{3pt}
    \draw[edge] (n5_1) -- (n5_7);
    \draw[edge] (n5_3) -- (n5_4);
    \draw[edge] (n5_4) -- (n5_7);
    \draw[edge] (n5_5) -- (n5_6);

    \draw[edge] (n6_1) -- (n6_2);
    \draw[edge] (n6_1) -- (n6_7);
    \doubleEdges{n6_2}{n6_3}{3pt}
    \draw[edge] (n6_3) -- (n6_4);
    \draw[edge] (n6_4) -- (n6_5);
    \draw[edge] (n6_4) -- (n6_7);

    \draw[edge] (n7_1) -- (n7_2);
    \draw[edge] (n7_1) -- (n7_7);
    \draw[edge] (n7_2) -- (n7_3);
    \draw[edge] (n7_3) -- (n7_4);
    \doubleEdges{n7_3}{n7_6}{3pt}
    \draw[edge] (n7_4) -- (n7_7);

    \draw[edge] (n8_1) -- (n8_2);
    \draw[edge] (n8_2) -- (n8_3);
    \draw[edge] (n8_3) -- (n8_4);
    \draw[edge] (n8_4) -- (n8_5);
    \draw[edge] (n8_3) -- (n8_6);
    \doubleEdges{n8_2}{n8_7}{3pt}

    \draw[edge] (n9_1) -- (n9_2);
    \doubleEdges{n9_1}{n9_6}{3pt}
    \draw[edge] (n9_2) -- (n9_3);
    \draw[edge] (n9_3) -- (n9_4);
    \draw[edge] (n9_4) -- (n9_5);
    \draw[edge] (n9_2) -- (n9_7);

    
    \filldraw[white,path fading=fade right] (-2.5,1) rectangle (21,-5);
    \filldraw[white,path fading=fade left] (21,1) rectangle (35,-5);

    
    \draw[thick] (2,-5.3) -- (16,-5.3);
    \draw[thick,dashed,dash pattern=on 2pt off 2pt] (16,-5.3) -- (22.5,-5.3);
    \draw[thick,-stealth] (22.5,-5.3) -- (38.5,-5.3);
    \draw[thick] (2.048,-5.3) -- (2.048,-4.8);
    \draw[thick] (6.5,-5.3) -- (6.5,-4.8);
    \draw[thick] (9.6,-5.3) -- (9.6,-4.8);
    \draw[thick] (13.1,-5.3) -- (13.1,-4.8);
    \draw[thick,-stealth] (13.1,-5.3) -- (16.25,-7.5);
    \draw[thick] (15.7,-5.3) -- (15.7,-4.8);
    \draw[thick] (23,-5.3) -- (23,-4.8);
    \draw[thick] (26.7,-5.3) -- (26.7,-4.8);
    \draw[thick] (29,-5.3) -- (29,-4.8);
    \draw[thick] (31,-5.3) -- (31,-4.8);
    \draw[thick] (32.5,-5.3) -- (32.5,-4.8);
    \draw[thick] (34,-5.3) -- (34,-4.8);
    \draw[thick] (35,-5.3) -- (35,-4.8);
    \draw[thick] (36.5,-5.3) -- (36.5,-4.8);

    \node[] (t1) at (37.5,-5.9) {\fontsize{64pt}{72pt}\selectfont \textbf{\Huge t }};

    \node[] (t2) at (1.7,-14) {\fontsize{64pt}{72pt}\selectfont \textbf{\Huge t}};

    \draw[thick,stealth-stealth] (5,-9) -- (9,-9);
    \node[] (CE) at (7,-8.2) {\fontsize{64pt}{72pt}\selectfont \textbf{\Huge CE loss}};

    \filldraw[fill=gray, thick] (16.25,-7.5) rectangle (22.25,-10.75);
    \filldraw[fill=gray,thick] (23.75,-7.5) rectangle (29.75,-10.75);

    \node[] (Predict) at (19.25,-8.4) {\fontsize{64pt}{72pt}\selectfont \textbf{\Huge Predict}};
    \node[] (G) at (19.25,-9.8) {\fontsize{64pt}{72pt}\selectfont $\boldsymbol{G}^{(0)}$}; 

    \node[] (Compute) at (26.75,-8.4) {\fontsize{64pt}{72pt}\selectfont \textbf{\Huge Compute}};
    \node[] (R) at (26.75,-9.8) {\fontsize{64pt}{72pt}\selectfont $\boldsymbol{\hat{R}}^{t,\theta} \mathbf{(G,\tilde{G})}$};

    \draw[thick,-stealth] (22.25,-9.125) -- (23.75,-9.125);
    \draw[thick,-stealth] (16.25,-9.125) -- (14.75,-9.125);

    \draw[thick,stealth-] (0.5,-13.3) -- (16,-13.3);
    \draw[thick,dashed,dash pattern=on 2pt off 2pt] (16,-13.3) -- (22.5,-13.3);
    \draw[thick] (22.5,-13.3) -- (36.5,-13.3);

    \draw[thick] (2.048,-13.3) -- (2.048,-12.8);
    \draw[thick] (8,-13.3) -- (8,-12.8);
    \draw[thick] (14,-13.3) -- (14,-12.8);
    \draw[thick,stealth-stealth] (8,-12.8) -- (14,-12.8);
    \node[] (tau) at (11,-12.3) {\fontsize{64pt}{72pt}\selectfont $\boldsymbol{\tau}$};
    \draw[thick] (24.5,-13.3) -- (24.5,-12.8);
    \draw[thick,stealth-] (24.55,-13.25) -- (26.75,-10.75);
    \node[] (Sample) at (28.25,-11.5) {\fontsize{64pt}{72pt}\selectfont \textbf{\Huge Sample}};
    \node[] (G2) at (28.25,-12.4) {\fontsize{64pt}{72pt}\selectfont $\boldsymbol{G}^{(t-\tau)}$};
    \draw[thick,-stealth] (31,-13.3) -- (31,-6.4) -- (19.25,-6.4) -- (19.25,-7.5);
    \draw[thick] (36.452,-13.3) -- (36.452,-12.8);
    
\end{tikzpicture}
    \caption{Overview of \textsc{Cometh}. During training, \textsc{Cometh}, unlike previous discrete-state diffusion models, transitions at any time $t \in [0,1]$, while during sampling, the step length is fixed to $\tau$. During sampling, we can additionally apply multiple corrector steps at $t-\tau$, which experimentally leads to better sample quality.}
    \label{fig:overview}
\end{figure}

\section{Background}\label{sec:back}
In the following, we overview the discrete-state continuous-time discrete-state diffusion framework on which \textsc{Cometh} builds. \emph{We provide a complete description of this framework in \Cref{app:back}}, providing intuitions and technical details, and refer to~\cite{yang2023diffusion} for a general introduction to diffusion models.

\xhdr{Continuous-time discrete diffusion} 
In the discrete diffusion setting, we aim at modeling a discrete data distribution $p_{\text{data}}(z^{(0)})$, where $z^{(0)} \in \mathcal{Z}$ and $\mathcal{Z}$ is a finite set with cardinality $S \coloneqq |\mathcal{Z}|$. A \new{continuous-time diffusion model}~\citep{campbell2022continuous, sun2022score, loudiscrete} is a \emph{stochastic process}, running from time $t=0$ to $t=1$. In the following, we denote the marginal distributions of the state $z^{(t)} \in \mathcal{Z}$ at time $t$ by $q_t(z^{(t)})$, and $q_{t\mid s}(z^{(t)}\mid z^{(s)})$ denotes the conditional distribution of the state $z^{(t)}$ given the state $z^{(s)} \in \mathcal{Z}$ at some time $s \in [0, 1]$. We also denote $\delta_{\tilde z, z}$ the Kronecker delta, which is equal to 1 if $\tilde z = z$ and 0 otherwise. We define a \new{forward process} which gradually transforms the marginal distribution $q_0(z^{(0)}) = p_{\text{data}}(z^{(0)})$ into $q_1(z^{(1)})$, that is \say{close} to an easy-to-sample prior distribution $p_{\text{ref}}(z^{(1)})$, e.g., a uniform categorical distribution.

We define the forward process as a \new{continuous-time Markov chain} (CTMC). The current state $z^{(t)}$ alternates between resting in the current state and transitioning to another state, where a \new{transition rate matrix} $\mR^{(t)} \in \mathbb R^{S\times S}$ controls the dynamics of the CTMC. Formally, the \new{forward process} is defined through the infinitesimal transition probability from $z^{(t)}$ to another state $\tilde z \in \mathcal{Z}$, for a infinitesimal time step $dt$ between time $t$ and $t+dt$,
\begin{equation*}
    q_{t+d t\mid t}\mleft(\tilde z  \mathrel{\big|} z^{(t)} \mright) \coloneqq \delta_{\tilde z, z^{(t)}} + R^{(t)} \mleft(z^{(t)}, \tilde z \mright)dt,
\end{equation*}

where $R^{(t)}(z^{(t)}, \tilde z)$ denotes the entry of $\mR^{(t)}$ that gives the rate from $z^{(t)}$ to $\tilde z$. Intuitively, the process is more likely to transition to a state where $R^{(t)}(z^{(t)}, \tilde z)$ is high, and $\mR^{(t)}$ is designed so that $q_1(z^{(1)})$ closely approximates the prior distribution $p_{\text{ref}}$.

The generative process is formulated as the time reversal of the forward process, therefore interpolating from $q_1(z^{(1)})$ to $p_{\text{data}}(z^{(0)})$. The rate of this reverse CTMC, $\hat \mR^{(t)}$, is intractable~\citep{campbell2022continuous} and has to be modeled by a parametrized approximation, i.e.,
\begin{equation*}
        \hat R^{t,\theta} (z, \tilde z) = R^{(t)}(\tilde z, z) \sum_{z^{(0)} \in \mathcal{Z}} \frac{q_{t\mid 0}\mleft(\tilde z\mid  z^{(0)}\mright)}{q_{t\mid 0} \mleft( z\mid  z^{(0)} \mright)} p^\theta_{0\mid t}\mleft(z^{(0)}\mid  z \mright), \text{ for } z \neq \tilde z,
\end{equation*}
where $p^\theta_{0\mid t} \mleft(z^{(0)}\mid  z \mright)$ is the \new{denoising neural network} with parameters $\theta$. 

For efficient training, the conditional distribution $q_{t\mid 0}(z^{(t)}\mid z^{(0)})$ needs to be analytically obtained; see \Cref{app:back} for more details. As demonstrated in \cite{campbell2022continuous}, this property is achieved by choosing $\mR^{(t)} = \beta(t)\mR_b$ with $\beta(t) \in \mathbb{R}$ being the \new{noise schedule} and $\mR_b \in \mathbb R^{S\times S}$ is a constant base rate matrix. 
We can now generate samples by simulating the reverse process from $t=1$ to $t=0$. Different algorithms can be employed for this purpose, such as Euler's method~\citep{sun2022score} or $\tau$-leaping~\citep{campbell2022continuous}.

\section{A \underline{Co}ntinuous-Ti\underline{m}e Discr\underline{e}te-S\underline{t}ate Grap\underline{h} Diffusion Model}\label{sec:model}

Here, we present our \textsc{Cometh} framework, a continuous-time discrete-state diffusion model for graph generation. Let $\llbracket m, n \rrbracket \coloneqq \{ m, \dotsc, n \} \subset \Nb$. We denote $n$-order attributed graph as a pair $\mG \coloneqq(G,\mX, \tE)$, where $G \coloneqq (V(G),E(G))$ is a graph, $\vec{X} \in \{0, 1\}^{n\times a}$, for $a > 0$, is a \new{node feature matrix}, and $\tE \in \{0, 1\}^{n \times n \times b}$, for $b>0$, is an \new{edge feature tensor}. Note that node and edge features are considered to be discrete and consequently encoded in a one-hot encoding. For notational convenience, in the following, we denote the graph at time $t \in [0,1]$ by $\mG^{(t)}$, the node feature of node $i \in V(G)$ at time $t$ by $x^{(t)}_i \in \llbracket 1,a \rrbracket$, and similarly the edge feature of edge $(i,j) \in E(G)$ at time $t$ by $e^{(t)}_{ij} \in \llbracket 1,b \rrbracket$. In addition, we treat the absence of an edge as a special edge with a unique edge feature. 
By $\mathds{1}$, we denote an all-one vector of appropriate size, by $\mI$, the identity matrix of appropriate size, while $\vec{0}$ denotes the all-zero matrix of appropriate size. Moreover, by $\va'$, we denote the transpose of the vector $\va$. 

\xhdr{Forward process factorization}
Considering the graph state-space would result in a prohibitively large state, making it impossible to construct a transition matrix. Therefore, we consider that the forward process factorizes and that the noise propagates independently on each node and edge, enabling us to consider node and edge state spaces separately. Formally, let $\mG=(G,\mX, \tE)$  be  $n$-order attributed graph, then we have
\begin{equation*}
    q_{t+d t\mid t}\left(\mG^{(t+d t)} \mathrel{\big|} \mG^{(t)} \right) \coloneqq \prod_{i=1}^n q_{t+d t\mid t} \left(x_i^{(t+d t)} \mathrel{\big|}  x_i^{(t)} \right) \prod_{i<j}^n q_{t+d t\mid t}\left(e_{ij}^{ \left(t+d t \right)} \mathrel{\big|} e_{ij}^{(t)} \right).
\end{equation*}
The above factorization reveals a challenge not yet addressed for the continuous-time diffusion model. In other types of discrete data, such as text, all tokens share the same support. In contrast, nodes and edges have different attributes, and their respective sets of labels may have different sizes. We, therefore, need to define their respective forward processes differently.

Following the design choice of \cite{campbell2022continuous} introduced in \Cref{sec:back}, we then define a pair of rate matrices $(\mR^{(t)}_X, \mR^{(t)}_E)$, with $\mR^{(t)}_X \coloneqq \beta(t)\mR_X$ and $\mR^{(t)}_E \coloneqq \beta(t)\mR_E$, where $\beta$ is the noise schedule and $\mR_X \in \mathbb R^{d\times d}, \mR_E \in \mathbb R^{e\times e}$ are base rate matrices for nodes and edges, respectively. 

\xhdr{Noise model} Several noise models have been proposed for discrete diffusion models, including uniform transitions, absorbing transitions~\citep{austin2021structured}, and marginal transitions~\citep{vignac2022digress}. We propose to use a rate matrix that is analogous to the marginal transition matrix, as it is well adapted for graph data. However, marginal transitions have not yet been utilized in the continuous-time framework. To address this, we extend this concept by constructing the following rate matrices, i.e., 
\begin{equation*}
    \mR_X = \mathds 1 \vm_X' - \mI \hspace{0.5cm}\text{and}\hspace{0.5cm}\mR_E = \mathds 1 \vm_E' - \mI,
\end{equation*}
where $\vm_X$ and $\vm_E$ are vectors representing the marginal distributions $m_X$ and $m_E$ of node and edge labels, i.e., they contain the frequency of the different labels in the dataset. With such a rate, the transition rate to a particular state depends on its marginal probability. Specifically, the more common a node or edge label is in the dataset, the higher the transition rate to that label. Consequently, this approach helps preserve sparsity in noisy graphs by favoring transitions to the \say{no edge} label. Embedding this inductive bias in the noise model simplifies the model's task, as it no longer needs to reconstruct this sparsity during the generative process.

\xhdr{Prior distribution and noise schedule}
We now aim to understand the relationship between the rate matrix and the endpoint of the diffusion process at \( t = 1 \), $q_1(\mG^{(1)})$. In fact, we need to choose $p_{\text{ref}}$ so that $q_1(\mG) \approx p_{\text{ref}}(\mG)$ for determining the appropriate prior distribution that aligns with the chosen rate matrix. Ideally, we seek to use the product of distributions $\prod_i^n m_X \prod_{i<j}^n m_E$, which, as demonstrated in \citet[Theorem 4.1]{vignac2022digress}, is optimal within the space of distributions factorized over nodes and edges. In the following, we explain how to design the noise schedule \( \beta \) to achieve this.

To better understand the relationship between the rate matrix and $q_1(z^{(1)})$, we require an explicit expression that readily links $q_{t\mid 0}(z^{(t)} \mid z^{(0)})$ to $\mR^{(t)}$. However, \cite{campbell2022continuous} provide the following closed-form for the former, which does not easily allow for the direct deduction of an appropriate prior distribution,
\begin{equation}\label{eq:camp_forward}
    q_{t\mid 0}\mleft(z^{(t)} = k \mathrel{\big|}  z^{(0)} = l\mright) = \mleft(\mP\exp \mleft[ \boldsymbol \Lambda \int_0^t \beta(s) ds \mright] \mP^{-1}\mright)_{kl},
\end{equation}
where $\mR_b = \mP \boldsymbol\Lambda \mP^{-1}$ and $\exp$ refers to the element-wise exponential. Given \Cref{eq:camp_forward}, it is not straightforward to determine the form of $q_{1\mid 0}(z^{(1)} \mid z^{(0)})$, and, consequently, the form of $q_1(z^{(1)})$. We therefore prove that this expression can be refined, offering clearer insights into the behavior of the forward process.
\begin{proposition}\label{prop:1}
    For a CTMC $(z^{(t)})_{t \in [0,1]}$ with rate matrix $\mR^{(t)} = \beta (t)\mR_b$ and $\mR_b = \mathds 1 \vm' - \mI$, the forward process can be written as
    \begin{equation*}
        \bar \mQ^{(t)} = e^{-\bar \beta^{(t)}}\mI + \mleft(1 - e^{-\bar \beta^{(t)}} \mright)\mathds 1 \vm',
    \end{equation*}
    where $(\bar \mQ^{(t)})_{ij} = q(z^{(t)} = i \mid z^0 = j)$ and $\bar \beta^{(t)} = \int_0^t \beta (s)ds$.
\end{proposition}

Therefore, if we can design $\bar \beta^{(t)}$ so that $\lim_{t\to 1} e^{-\bar \beta^{(t)}} = 0$, we get that $\lim_{t\to 1}\bar\mQ^{(t)} = \mathds 1 \vm'$, which means that $z^{(1)}$ is sampled from the categorical distribution $m$ whatever the value of $z^{(0)}$, i.e., $q_1(z^{(1)}) = m$. In our case, since \Cref{prop:1} holds for every node and edge, and given that the forward process is factorized, this would yield that $q_1(\mG^{(1)}) = \prod_i^n m_X \prod_{i<j}^n m_E$ as desired. We provide a more detailed explanation in \Cref{app:transitionmatrix}.

Even though, in theory, one should set  $\lim_{t\to 1} \bar\beta^{(t)}  = + \infty$ so that  $\lim_{t\to 1}\bar\mQ^{(t)} = \mathds 1 \vm'$, it considerably restricts the space of possible noise schedules. Relying on the exponentially decreasing behavior of the cumulative noise schedule $e^{-\bar \beta^{(t)}}$, one only needs to ensure that  $\bar \beta^{(1)}$ is high enough so that $\bar\mQ^{(1)}$ satisfyingly approximate $\mathds 1 \vm'$. \cite{campbell2022continuous} proposed a constant noise schedule for categorical data. However, our experimental findings indicate that following an older heuristic, i.e. using a cosine-shaped schedule as introduced for discrete-time models in \citet{nichol2021improved}, yields better results. We therefore propose to use a \new{cosine noise schedule}, where
\begin{equation*}
    \beta(t) = \alpha\frac{\pi}{2} \sin\mleft(\frac{\pi}{2}t\mright) \quad \text{ and } \quad \int_0^t \beta (s) ds = \alpha \mleft(1 - \cos\mleft(\frac{\pi}{2}t \mright) \mright).
\end{equation*}
Here, $\alpha$ is a constant factor. Since $e^{-\bar \beta^1} = e^{-\alpha}$, given that $\alpha$ is high enough, $q_1(\mG^{(1)})$  will be close to $\mathds 1 \vm'$, and we can therefore use $\prod_i^n m_X \prod_{i<j}^n m_E$ as our prior distribution. We provide more intuition on our noise schedule in~\Cref{app:noiseschedule}.
Finally, noising an $n$-order attributed graph $\mG=(G,\mX, \tE)$  amounts to sample from the following distribution,
\begin{equation*}
    q_{t\mid 0} \mleft(\mG^{(t)} \mathrel{\big|}  \mG \mright) = \mleft(\mX \bar\mQ^{(t)}_X, \tE \bar\mQ^{(t)}_E \mright).
\end{equation*}
Since we consider only undirected graphs, we apply noise to the upper-triangular part of $\tE$ and symmetrize. Note that we apply noise to a graph in the same manner as in discrete-time models, with the only difference being that $t$ is no longer discrete.

\xhdr{Parametrization and optimization} We can formulate the approximate reverse rate for our graph generation model. We set the unidimensional rates according to the parametrized approximation derived by \cite{campbell2022continuous}:
\begin{equation*}
    \hat R^{t,\theta}_X \mleft(x^{(t)}_i, \tilde x_i \mright) = R_X^{(t)} \mleft(\tilde x_i, x^{(t)}_i \mright) \sum_{x_0} \frac{q_{t\mid 0} \mleft(\tilde x_i\mid  x^{(0)}_i \mright)}{q_{t\mid 0} \mleft( x^{(t)}_i\mid  x^{(0)}_i \mright)} p^\theta_{0\mid t} \mleft(x^{(0)}_i\mid  \mG^{(t)} \mright), \text{ for } x^{(t)}_i \neq \tilde x_i,
\end{equation*}

and similarly for the edge reverse rate $\hat R^{t,\theta}_E \mleft(e^{(t)}_{ij}, \tilde e_{ij} \mright)$. We elaborate on how to use those rates to simulate the reverse process in Section \ref{sec:reverseprocess}. Formally, we set the  overall reverse rate to
\begin{equation*}
    \hat R^{t,\theta}(\mG, \tilde \mG) = \sum_i \delta_{\mG^{\textbackslash x_i}, \tilde \mG^{\textbackslash x_i}}\hat R^{t,\theta}_X (x^{(t)}_i, \tilde x) + \sum_{i < j}  \delta_{\mG^{\textbackslash e_{ij}}, \tilde \mG^{\textbackslash e_{ij}}} \hat R^{t,\theta}_E (e^{(t)}_{ij}, \tilde e_{ij}),
\end{equation*}
where $\delta_{\mG^{\textbackslash x_i}, \tilde \mG^{\textbackslash x_i}}$ is 1 if $\mG$ and $\tilde \mG$ differ only on node $i$ and 0 otherwise, and similarly $\delta_{\mG^{\textbackslash e_{ij}}, \tilde \mG^{\textbackslash e_{ij}}}$ for edges.
The reverse rate exhibits the classic diffusion model parametrization, which relies on predicting a clean data point $\mG$ given a noisy input $\mG^{(t)}$. We, therefore, train a denoising neural network $p^\theta_{0\mid t}(\mG\mid  \mG^{(t)})$ to this purpose. The outputs are normalized into probability distributions for node and edge labels.

The model can be optimized using the continuous-time ELBO formulated in \cite{campbell2022continuous}. However, our preliminary experiments using this loss led to poor empirical results. Given the similarities between the discrete and continuous-time optimization processes, we followed \cite{vignac2022digress} and used the cross-entropy loss $\mathcal{L}_{\text{CE}}$ as our optimization objective, i.e.,
\begin{equation}\label{eq:loss}
   \mathbb{E}_{t \sim \mathcal{U}([0,1]),p_{\text{data}}(\mG^{(0)}),q(\mG^{(t)} | \mG^{(0)})}\mleft[-\sum_i^n \log p^\theta_{0\mid t} \mleft(x^{(0)}_i \mathrel{\big|}  \mG^{(t)} \mright) - \lambda \sum_{i<j}^n \log p^\theta_{0\mid t}\mleft(e^{(0)}_{ij}\mid \mG^{(t)} \mright)\mright],
\end{equation}
where $\lambda \in \mathbb R^+$ is a scaling factor that controls the relative importance of edges and nodes in the loss. In practice, we set $\lambda > 1$ so that the model prioritizes predicting a correct graph structure over predicting correct node labels.

\xhdr{Reverse process (Tau-leaping and predictor-corrector)}
\label{sec:reverseprocess} At the sampling stage, we use the $\tau$-leaping algorithm to generate new samples, as proposed by \cite{campbell2022continuous}. We first sample the graph size and then sample the noisy graph $G^{(t)}$ from the prior distribution. At each iteration, we keep $G^{(t)}$ and $\hat R^{t,\theta}(G, \tilde G)$ constant and simulate the reverse process for a time interval of length $\tau$. In practice, we count all the transitions between $t$ and $t - \tau$ and apply them simultaneously.

The number of transitions between $x^{(t)}_i$ and $\tilde x_i$ (respectively  $e^{(t)}_{ij}$ and $\tilde e_{ij}$) is Poisson distributed with mean $\tau R^{(t),\theta}_X (x^{(t)}_i, \tilde x)$ (respectively $\tau \hat R^{(t),\theta}_E (e^{(t)}_{ij}, \tilde e_{ij})$). Since our state space has no ordinal structure, multiple transitions in one dimension are meaningless, and we reject them. In addition, we experiment using the predictor-corrector scheme. After each predictor step using $\hat R^{(t),\theta}(G, \tilde G)$, we can also apply several corrector steps using the expression defined in \cite{campbell2022continuous}, i.e.,  $\hat R^{(t), c} = \hat R^{(t),\theta} + R^{(t)}$. This rate admits $q_t(\mG^{(t)})$ as its stationary distribution, which means that applying the corrector steps brings the distribution of noisy graphs at time $t$ closer to the marginal distribution of the forward process.

As $\tau$ is fixed only during the sampling stage, its value can be adjusted to balance sample quality and efficiency, i.e., the number of model evaluations. We perform such an ablation study in  \Cref{app:addexp}.

\xhdr{Neural Network  and Encoding} In all our experiments, we use the graph transformer proposed by \cite{vignac2022digress}; see \Cref{fig:digress} in the appendix. Relying on the fact that discrete diffusion models preserve the sparsity of noisy graphs, they propose a large set of features to compute at each denoising step to boost the expressivity of the model. This set includes Laplacian features, connected components features, and node- and graph-level cycle counts. Even though this set of features has been successfully used in follow works, e.g., \citet{vignac2023midi}, \citet{qin2023sparse}, \citet{igashov2023retrobridge}, no theoretical nor experimental study exists to investigate the relevance of those particular features. In addition, a rich literature on encodings in graph learning has been developed, e.g., LapPE ~\citep{kreuzer2021rethinking}, SignNet ~\citep{d2023sign}, RWSE~\citep{dwivedi2021graph}, SPE ~\citep{huang2023stability}, which led us to believe that powerful encodings developed for discriminative models should be transferred to the generative setting.

Specifically, in our experiments, we leverage the \new{relative random-walk probabilites} (RRWP) encoding, introduced in \cite{ma2023graph}. Denoting $\mA$ the adjacency matrix of a graph $G$, $\mD$ the diagonal degree matrix, and $\mM = \mD^{-1}\mA$ the degree-normalized adjacency matrix,  for each pair of nodes $(i,j) \in V(G)^2$, the RRWP encoding computes \begin{equation}\label{eq:rrwp}
    P_{ij}^K \coloneq \left[I_{ij}, M_{ij}, M_{ij}^2, \dots, M_{ij}^{K-1}\right],
\end{equation}
where $K$ refers to the maximum length of the random walks. The entry $P_{ii}^K$ corresponds to the RWSE encoding of node $i$; therefore, we leverage them as node encodings. This encoding provides an efficient and elegant solution for boosting model expressivity and performance through a unified encoding.

In the following, we show that RWPP encoding can (approximately) determine if two nodes lie in the same connected components, approximate the size of the largest connected component, and count small cycles.
\begin{theorem}[Informal]
For $n \in \mathbb{N}$, let $\mathcal{G}_n$ denote the set of $n$-order graphs and for a graph $G \in \mathcal{G}_n$ let $V(G) \coloneqq  \llbracket 1,n \rrbracket$. Then RRWP composed with a universally approximating feed-forward neural network can 
\begin{enumerate}[itemsep=0mm]
    \item determine if two vertices are in the same connected component,
    \item approximate the size of the largest connected component in $G$, 
    \item approximately count the number $p$-cycles, for $p < 5$, in which a node is contained. 
\end{enumerate} 
\end{theorem}
See~\Cref{rrwp} for the detailed formal statements.
However, we can also show that RRWP encodings cannot detect if a node is contained in a large cycle of a given graph. We say the RRWP encoding counts the number of $p$-cycles for $p \geq 2$ if there do not exist two graphs, one containing at least one $p$-cycle while the other does not, while the RRWP encodings of the two graphs are equivalent. 
\begin{proposition}
For $p \geq 8$, the RRWP encoding does not count the number of $p$-cycles.
\end{proposition}
Hence, for $p \geq 8$ and $K\geq 0$, there exists a graph and two vertex pairs $(r,s),(v,w) \in V(G)^2$ such that $(r,s)$ is contained in $C$ while $(v,w)$ is not and $P_{vw}^K = P_{rs}^K$.  

\xhdr{Equivariance properties}
Since graphs are invariant to node reordering, it is essential to design methods that capture this fundamental property of the data. Relying on the similarities between \textsc{Cometh} and \textsc{DiGress}, we establish that \textsc{Cometh} is permutation-equivariant and that our loss is permutation-invariant. We also establish that the $\tau$-leaping sampling scheme and the predictor-corrector yield exchangeable distributions, i.e., the model assigns each graph permutation the same probability. Since those results mainly stem from proofs outlined in \citet{vignac2022digress}, we moved them to \Cref{app:equi}. 

\xhdr{Conditional generation}
If an unconditional generation is essential to designing an efficient diffusion model, conditioning the generation on some high-level property is critical in numerous real-world applications~\cite {corso2023diffdock, lee2022mgcvae}. In addition, \cite{vignac2022digress} used \new{classifier guidance}, which relies on a trained unconditional model guided by a regressor on the target property. However, to our knowledge, classifier guidance has yet to be adapted to continuous-time discrete-state diffusion models. We, therefore, leverage another approach to conditional diffusion models, \new{classifier-free guidance}~\citep{tang2022improved}, for which we provide a detailed description in  \Cref{app:cfguidance}.

\section{Experiments}\label{sec:exp}

We empirically evaluate \textsc{Cometh} on synthetic and real-world graph generation benchmarks in the following. For all datasets, the results obtained with the raw model are denoted as \textsc{Cometh}, while the results using the predictor-corrector are referred to as \textsc{Cometh-PC}. We also conduct an ablation study on the number of model evaluations in \Cref{app:addexp} to show how sampling quality and efficiency can be balanced. The code will be made publicly available in the near future to ensure reproducibility.

\subsection{Synthetic graph generation: \textsc{Planar} and \textsc{SBM}}

Here, we outline experiments regarding synthetic graph generation.

\xhdr{Datasets and metrics} We first validate \textsc{Cometh} on two benchmarks proposed by \cite{martinkus2022spectre}, \textsc{Planar} and \textsc{SBM}.  We measure four standard metrics to assess the ability of our model to capture structural properties of the graph distributions, i.e., degree (\textbf{Degree}), clustering coefficient (\textbf{Cluster}), orbit count(\textbf{Orbit}), and the eigenvalues of the graph Laplacian (\textbf{Spectrum}). The reported values are the maximum mean discrepancies (MMD) between those metrics evaluated on the generated graphs and the test set. We also report the percentage of valid, unique, and novel (\textbf{VUN}) samples among the generated graphs to further assess the ability of our model to capture the properties of the targeted distributions correctly. We provide a detailed description of the metrics in \Cref{app:synth}.

\xhdr{Baselines} We evaluate our model against several graph diffusion models, namely \textsc{DiGress} \citep{vignac2022digress}, \textsc{GruM} \citep{jo2024graph}, two autoregressive models, \textsc{Gran} \citep{liao2019efficient}, and \textsc{GraphRnn} \citep{you2018graphrnn}, and a
GAN, \textsc{Spectre} \citep{martinkus2022spectre}.  We re-evaluated previous state-of-the-art models over five runs, namely \textsc{DiGress} and \textsc{GruM}, to provide error bars for the results.

 \xhdr{Results} See \Cref{tab:abstract}. On \textsc{Planar}, our \textsc{Cometh} yields very good results over all metrics, being only outperformed by \textsc{DiGress} on \textbf{Degree} and \textbf{Orbit}, but with a much lower \textbf{VUN}. We observe that the sampling quality benefits from the predictor-corrector scheme, with a near-perfect \textbf{VUN} score. On \textsc{SBM}, we also obtain state-of-the-art results on all metrics. However, we found that the predictor-corrector did not improve performance on this dataset.

\begin{table}[]
    \caption{\textbf{Synthetic graph generation results.} We report the mean of five runs, as well as 95\% confidence intervals. We reproduced the baselines results for \textsc{DiGress} and \textsc{GruM}, the rest is taken from \cite{jo2024graph}. Best results are highlighted in bold.}
    \centering
    \scalebox{0.8}{
    \begin{tabular}{l c c c c c}
    \toprule
     \textbf{Model}    &  \textbf{Degree} $\downarrow$ & \textbf{Cluster} $\downarrow$ & \textbf{Orbit} $\downarrow$ & \textbf{Spectrum} $\downarrow$ & \textbf{VUN} [\%] $\uparrow$ \\ \midrule 
    \multicolumn{6}{c}{\emph{Planar graphs}} \\
    \midrule
     \textsc{GraphRnn} & 24.5  & 9.0  & 2508  & 8.8  & 0 \\
     \textsc{Gran}     & 3.5   & 1.4  & 1.8   & 1.4  & 0 \\
     \textsc{Spectre}  & 2.5   & 2.5  & 2.4   & 2.1  & 25 \\
     \textsc{DiGress}  & \textbf{0.8}$\spm{0.0}$   & 4.1$\spm{0.3}$  & \textbf{0.5}$\spm{0.0}$   & --    & 76.0$\spm{0.1}$ \\
     \textsc{GruM}     & 2.6$\spm{1.7}$  & 1.3$\spm{0.3}$  & 10.0$\spm{7.7}$   & 1.4$\spm{0.2}$  & 91.0$\spm{5.7}$ \\
     \textsc{Cometh}   & 2.1$\spm{1.3}$  & 1.5$\spm{0.4}$  & 3.1$\spm{3.0}$ &\textbf{1.3}$\spm{0.3}$ & 92.5$\spm{3.7}$   \\
     \textsc{Cometh}--PC & 2.0$\spm{0.9}$ & \textbf{1.1}$\spm{0.1}$  & 7.7$\spm{3.8}$ &\textbf{1.3}$\spm{0.2}$ & \textbf{99.5}$\spm{0.9}$ \\
     \midrule
    \multicolumn{6}{c}{\emph{Stochastic block model}} \\
    \midrule
     \textsc{GraphRnn} & 6.9   & 1.7  & 3.1 & 1.0  & 5 \\
     \textsc{Gran}     & 14.1  & 1.7  & 2.1 & 0.9  & 25 \\
     \textsc{Spectre}  & 1.9   & 1.6  & 1.6 & 0.9 & 53 \\
     \textsc{DiGress}  & \textbf{1.7}$\spm{0.1}$   & 5.0$\spm{0.1}$ & 3.6$\spm{0.4}$ & --   & 74.0$\spm{4.0}$ \\
     \textsc{GruM}  & 2.6$\spm{1.0}$  & \textbf{1.5}$\spm{0.0}$  & 1.8$\spm{0.4}$   & 0.9$\spm{0.2}$  & 69.0$\spm{8.5}$  \\
     \textsc{Cometh} & 2.4$\spm{1.1}$  & \textbf{1.5}$\spm{0.0}$  & \textbf{1.7}$\spm{0.2}$ & \textbf{0.8}$\spm{0.1}$  & \textbf{77.0}$\spm{5.3}$  \\
     \bottomrule
    \end{tabular}}
    \label{tab:abstract}
\end{table}

\subsection{Small-molecule generation: QM9}

Here, we outline experiments regarding small-molecule generation.

\xhdr{Datasets and metrics} To assess the ability of our method to model attributed graph distributions, we evaluate its performance on the standard dataset QM9 (\cite{wu2018moleculenet}). Molecules are kekulized using the RDKit library, removing hydrogen atoms. We use the same split as \cite{vignac2022digress}, with 100k molecules for training, 10k for testing, and the remaining data for the validation set. We want to stress that this split differs from \cite{jo2022score}, which uses $\sim$ 120k molecules for training and the rest as a test set. We choose to use the former version of this dataset because it allows for selecting the best checkpoints based on the evaluation of the ELBO on the validation set. We report the \textbf{Validity} over 10k molecules, as evaluated by RDKit sanitization, as well as the \textbf{Uniqueness}, \textbf{FCD}, using the MOSES benchmark, and \textbf{NSPDK}. \Cref{app:qm9} provides a complete description of the metrics.

\xhdr{Baselines} We evaluate our model against several recent graph generation models, including two diffusion models, \textsc{DiGress} \cite{vignac2022digress} and \textsc{Gdss} \citep{jo2022score}), and two autoregressive models, \textsc{Hggt} \cite{jang2023graph} and \textsc{GraphARM} \cite{kong2023autoregressive}).

\xhdr{Results}
We report results using 500 denoising steps for a fair comparison, as in \cite{vignac2022digress}. \textsc{Cometh} performs very well on this simple molecular dataset, notably outperforming its discrete-time counterpart \textsc{DiGress} in terms of valid and unique samples with similar FCD and NSPDK. We experimentally found that the predictor-corrector does not improve performance on this dataset; therefore, we do not report results using this sampling scheme.

\begin{table}[]
    \centering
      \caption{\textbf{Molecule generation results on QM9.} We report the mean of five runs, as well as 95\% confidence intervals. Best results are highlighted in bold. Baseline results are taken from \cite{jang2023graph}.}
      \scalebox{0.8}{
    \begin{tabular}{l c c c c c}
    \toprule
     \textbf{Model}    &  \textbf{Validity} $\uparrow$ & \textbf{Uniqueness} $\uparrow$ & \textbf{Valid} \& \textbf{Unique} $\uparrow$ & \textbf{FCD} $\downarrow$ & \textbf{NSPDK} $\downarrow$ \\ \midrule 
     \textsc{GDSS} & 95.72 & \textbf{98.46} & 94.25 & 2.9 & 0.003\\
     \textsc{DiGress} & 99.01 & 96.34 & 95.39 & \textbf{0.25} & 0.001\\
     \textsc{GraphARM} & 90.25 & 95.26 & 85.97 & 1.22 & 0.002\\
     \textsc{Hggt} & 99.22 & 95.65 & 94.90 & 0.40 & \textbf{0.000}\\
     \textsc{Cometh}  & \textbf{99.57}$\spm{0.07}$ & 96.76$\spm{0.17}$ & \textbf{96.34}$\spm{0.2}$ & \textbf{0.25}$\spm{0.01}$ & \textbf{0.000}$\spm{0.00}$\\
     \bottomrule
    \end{tabular}}
  \vspace{-10pt}
    \label{tab:qm9}
\end{table}

\subsection{Molecule generation on large datasets: Moses and GuacaMol}

Here, we outline experiments regarding large-scale molecule generation.

\xhdr{Datasets and benchmarks}
We also evaluate our model on two much larger molecular datasets, MOSES \citep{polykovskiy2020molecular}) and GuacaMol \citep{brown2019guacamol}. The former is a refinement of the ZINC database and includes 1.9M molecules, with 1.6M allocated to training. The latter is derived from the ChEMBL database and comprises 1.4M molecules, from which 1.1M are used for training. We use a preprocessing step similar to \cite{vignac2022digress} for the GuacaMol dataset, which filters molecules that cannot be mapped from SMILES to graph and back to SMILES.

Both datasets come with their own metrics and baselines, which we briefly describe here. As for QM9, we report \textbf{Validity}, as well as the percentage of \textbf{Valid \& Unique (Val. \& Uni.)} samples, and \textbf{Valid, Unique and Novel (VUN)} samples for both datasets. We also report \textbf{Filters}, \textbf{FCD}, \textbf{SNN}, and \textbf{Scaf} for MOSES, as well as \textbf{KL div} and \textbf{FCD} for GuacaMol. These metrics are designed to evaluate the model's capability to capture the chemical properties of the learned distributions. We provide a detailed description of those metrics in \Cref{app:large}.

\xhdr{Results}
Similarly to previous graph generation models, \textsc{Cometh} does not match the performance of molecule generation methods that incorporate domain-specific knowledge, especially SMILES-based models (see \Cref{tab:moses}). However, \textsc{Cometh} further bridges the gap between graph diffusion models and those methods, outperforming \textsc{DiGress} in terms of validity by a large margin.

On GuacaMol (see Table \ref{tab:guacamol}), \textsc{Cometh} obtains excellent performance in terms of VUN samples, with an impressive $12.6\%$ improvement over \textsc{DiGress}. The LSTM model still surpasses our graph diffusion model on the \textbf{FCD} metric. This may be due to the fact that we train on a subset of the original dataset, whereas the LSTM is trained directly on SMILES.

\begin{table}[t]
    \centering
    \caption{\textbf{Molecule generation on MOSES.} We report the mean of five runs, as well as 95\% confidence intervals.}
        \label{tab:moses}
        \scalebox{0.8}{
    \begin{tabular}{l c c c c c c c c c} \toprule
    \textbf{Model} &  \textbf{Class} &   \textbf{Val.} $\uparrow$ & \textbf{Val. \& Uni.} $\uparrow$ & \textbf{VUN} $\uparrow$& \textbf{Filters} $\uparrow$ & \textbf{FCD} $\downarrow$ & \textbf{SNN} $\uparrow$ & \textbf{Scaf} $\uparrow$\\ 
    \midrule
    \textsc{VAE} & Smiles & 97.7 & 97.5 & 67.8 & 99.7 & 0.57 & 0.58 & 5.9 \\
    \textsc{JT-VAE} & Fragment & 100 & 100 & 99.9 & 97.8 & 1.00 & 0.53 & 10 \\
    \textsc{GraphInvent} & Autoreg. & 96.4 & 96.2 & -- & 95.0 & 1.22 & 0.54 & 12.7 \\
    \textsc{DiGress} & One-shot & 85.7 & 85.7 & 81.4 & 97.1 & 1.19 & 0.52 & 14.8 \\
    \textsc{Cometh} & One-shot & 87.0$\spm{0.2}$ & 86.9$\spm{0.2}$ & 83.8$\spm{0.2}$ & 97.2$\spm{0.1}$ & 1.44$\spm{0.02}$ & 0.51$\spm{0.0}$ & 15.9$\spm{0.8}$ \\
    \textsc{Cometh}--PC & One-shot & 90.5$\spm{0.3}$ & 90.4$\spm{0.3}$ & 83.7$\spm{0.2}$ & 99.1$\spm{0.1}$ & 1.27$\spm{0.02}$ & 0.54$\spm{0.0}$ & 16.0$\spm{0.7}$ \\
    \bottomrule
    \end{tabular}}
\end{table}

\begin{table}[t]
    \centering
    \vspace{-0.1cm}
        \caption{\textbf{Molecule generation on GuacaMol.} We report the mean of five runs, as well as 95\% confidence intervals. Conversely to MOSES, the GuacaMol benchmark reports scores, so higher is better.}
        \scalebox{0.8}{
    \begin{tabular}{l c c c c c c} \toprule
    Model & \textbf{Class} &  \textbf{Val.}$\uparrow$ & \textbf{Val. \& Uni.}$\uparrow$ & \textbf{VUN}$\uparrow$ & \textbf{KL div}$\uparrow$ & \textbf{FCD}$\uparrow$ \\
    \toprule
    \textsc{LSTM} & Smiles & 95.9 & 95.9 & 87.4 & 99.1 & 91.3 \\
    \textsc{NAGVAE} & One-shot & 92.9 & 95.5 & 88.7 & 38.4 & 0.9\\
    \textsc{MCTS} & One-shot & 100.0 & 100.0 & 95.4 & 82.2 & 1.5 \\
    \textsc{DiGress} & One-shot & 85.2 & 85.2 & 85.1 & 92.9 & 68.0 \\
    \textsc{Cometh} & One-shot & 94.4$\spm{0.2}$ & 94.4$\spm{0.2}$ & 93.5$\spm{0.3}$ & 94.1$\spm{0.4}$ & 67.4$\spm{0.3}$ \\
    \textsc{Cometh}--PC & One-shot & 98.9$\spm{0.1}$ & 98.9$\spm{0.1}$ & 97.6$\spm{0.2}$ & 96.7$\spm{0.2}$ & 72.7$\spm{0.2}$ \\
    \bottomrule
    \end{tabular}}
    \label{tab:guacamol}
\end{table}

\subsection{Conditional generation}
\begin{wraptable}{r}{0.63\textwidth}
    \centering
    \vspace{-0.1cm}
        \caption{\textbf{Conditional molecule generation results on QM9}. We report the mean of five runs, as well as 95\% confidence intervals.}
    \scalebox{0.65}{
    \begin{tabular}{l c c c c c c} \toprule \vspace{0.2cm}
    \multirow{2}{*}{Model} & \multicolumn{2}{c}{$\mu$} & \multicolumn{2}{c}{HOMO} & \multicolumn{2}{c}{$\mu$ \& HOMO} \\
     & MAE $\downarrow$ & Val $\uparrow$ & MAE $\downarrow$ & Val $\uparrow$ & MAE $\downarrow$ & Val $\uparrow$  \\
    \toprule
    \textsc{DiGress} & 0.81 & -- & 0.56 & -- & 0.87 & -- \\
    \textsc{Cometh} & $\bm{0.67}$$\spm{0.02}$ & 88.8$\spm{0.5}$ & $\bm{0.32}$$\spm{0.01}$ & 94.1$\spm{0.8}$ & $\bm{0.58}$$\spm{0.01}$ & 92.5$\spm{0.7}$ \\
    \bottomrule
    \end{tabular}}
    \label{tab:cond_qm9}
\end{wraptable}
We perform conditional generation on QM9 following the setting of \cite{vignac2022digress}. We target two molecular properties, the \textbf{dipole moment} $\mu$ and the \textbf{highest occupied molecular orbital energy (HOMO)}. We sample 100 properties from the test set for each experiment and use them as conditioners to generate ten molecules. We estimate the properties of the sampled molecules using the Psi4 library (\cite{smith2020psi4}) and report the \textbf{mean absolute error (MAE)} between the estimated properties from the generated set and the targeted properties.

We report our results against \textsc{DiGress} in Table \ref{tab:cond_qm9}. Overall, \textsc{Cometh} outperforms \textsc{DiGress} by large margin, with $18\%$, $43\%$, and $33\%$ improvements on $\mu$, HOMO and both targets respectively. Those performance improvements indicate the superiority of classifier-free guidance over classifier-guidance for conditional graph generation.

\section{Conclusion}
Here, to leverage the benefits of continuous-time and discrete-state diffusion model, we proposed \textsc{Cometh}, a {co}ntinuous-ti{m}e discr{e}te-s{t}ate grap{h} diffusion model, integrating graph data into a continuous diffusion model framework. We introduced a new noise model adapted to graph specificities using different node and edge rates and a tailored marginal distribution and noise schedule. In addition, we successfully replaced the structural features of \textsc{DiGress} with a single encoding with provable expressivity guarantees, removing unnecessary features. Empirically, we showed that integrating continuous time leads to significant improvements across various metrics over state-of-the-art discrete-state diffusion models on a large set of molecular and non-molecular benchmark datasets.  

\section*{Acknowledgments}
This work was performed as part of the Helmholtz School for Data Science in Life, Earth and Energy (HDS-LEE) and received funding from the Helmholtz Association of German Research Centres. Christopher Morris is partially funded by a DFG Emmy Noether grant (468502433) and RWTH Junior Principal Investigator Fellowship under Germany’s Excellence Strategy.

\bibliography{bibliography}

\newpage
\appendix
\appendixpage

We provide proofs and additional theoretical background in \Cref{sec:app_theo}. We give details on our implementation in \Cref{sec:app_impl}, experimental details in \Cref{sec:app_exp} and additional experimental results in \Cref{app:addexp}. Finally, we provide visualization for generated samples in \Cref{sec:app_samp}.

\section{Theoretical details}\label{sec:app_theo}

Here, we outline the theoretical details of our \textsc{Cometh} architecture.
\subsection{Additional background}\label{app:back}

\xhdr{Continuous-time discrete diffusion}
Here, we provide a more detailed description of the continuous-time, discrete-state diffusion model introduced in \cite{campbell2022continuous}.

We begin by recalling our notations. We aim to model a discrete data distribution $p_{\text{data}}(z^{(0)})$, where $z^{(0)} \in \mathcal{Z}$ and $\mathcal{Z}$ is a finite set with cardinality $S \coloneqq |\mathcal{Z}|$. In the following, the state is denoted by $z^{(t)} \in \mathcal{Z}$, where time is denoted by $t\in [0, 1]$, and $\vz^{(t)} \in \{0, 1\}^S$ is its one-hot encoding. The marginal distributions at time $t$ are denoted by $q_t(z^{(t)})$ and the conditional distribution of the state $z^{(t)}$ given the state $z^{(s)}$ at some time $s \in [0, 1]$ by $q_{t\mid s}(z^{(t)}\mid z^{(s)})$. We also denote $\delta_{\tilde z, z}$ the Kronecker delta, which equals 1 if $\tilde z = z$ and 0 otherwise.

This model builds upon continuous-time Markov chains (CTMCs). CTMCs are continuous-time processes in which the state $z^{(t)}$ alternates between remaining in the current state and transitioning to another state. The dynamics of the CTMC are governed by a rate matrix $\mR^{(t)} \in \mathbb{R}^{S \times S}$, where $S$ represents the number of possible states. We denote $R^{(t)}(z^{(t)}, \tilde z)$ the transition rate from the state $z^{(t)}$ to another state $\tilde z$.

Precisely, a CTMC satisfies three differential equations:
\begin{align*}
    \text{(forward)}\hspace{1cm} &\partial_t q_{t\mid s}(\tilde z\mid z) = \sum_y q_{t\mid s}(y\mid z)R^{(t)}(y, \tilde z),\\
    \text{(backward)}\hspace{1cm} & \partial_s q_{t\mid s}(z\mid \tilde z) = \sum_y R^{(s)}(\tilde z, y)q_{t\mid s}(z\mid y),\\
    \text{(forward, marginals)}\hspace{1cm} & \partial_t q_t(z) = \sum_y q_t(y)R^{(t)}(y,z),
\end{align*}
where $z, \tilde z, y \in \mathcal{Z}$ and $t, s \in [0,1]$. The infinitesimal probability of transition from $z^{(t)}$ to another state $\tilde z \in \mathcal{Z}$, for a infinitesimal time step $dt$ between time $t$ and $t+dt$ is given by
\[
    q_{t+d t\mid t}(\tilde z\mid z^{(t)}) \coloneqq \left\{
        \begin{array}{ll}
            R^{(t)}(z^{(t)}, \tilde z)d t & \text{if } \tilde z \neq z^{(t)} \\
            1 + R^{(t)}(z^{(t)}, \tilde z)d t & \text{if } \tilde z = z^{(t)}
        \end{array}
    \right.
\]
\[
    = \delta_{\tilde z, z^{(t)}} + R^{(t)}(z^{(t)}, \tilde z)d t.
\]
The rate matrix must satisfy the following conditions:
\[
    R^{(t)}(z^{(t)}, \tilde z) \geq 0, \quad \text{and} \quad R^{(t)}(z^{(t)}, z^{(t)}) = - \sum_{\tilde z} R^{(t)}(z^{(t)}, \tilde z) < 0.
\]
The second condition ensures that $q_{t+d t\mid t}(\cdot \mid z^{(t)})$ sums to 1. Intuitively, the rate matrix contains instantaneous transition rates, i.e., the number of transitions per unit of time. Therefore, the higher $R^{(t)}(z^{(t)}, \tilde z)$, the more likely the transition from $z^{(t)}$ to $\tilde z$. Since the diagonal coefficients of the rate matrix can be derived from the off-diagonal ones, we will define only the latter in the following.

The CTMC is initialized with $q(z^{(0)}) = p_{\text{data}}(z^{(0)})$. The key challenge in designing the rate matrix is ensuring that the forward process converges to a well-known categorical distribution, that we can later use as prior distribution during the generative process. In the discrete case, such a distribution can be, e.g., a uniform distribution, a discretized-Gaussian distribution, or an absorbing distribution~\citep{campbell2022continuous}.

The reverse process can also be formulated as a CTMC. Using similar notations than for the forward process, the reverse process is defined through 
\[
    q_{t\mid t+d t}(z^{(t)} \mid \tilde z) = \delta_{z^{(t)}, \tilde z} + \hat R^{(t)}(\tilde z, z^{(t)})d t,
\]
where $\hat \mR^{(t)}$ is the rate matrix for the reverse process. Similar to discrete-time diffusion models, this reverse rate can be expressed as \citep[Proposition 1]{campbell2022continuous}:
\[
    \hat R^{(t)}(\tilde z, z^{(t)}) = R^{(t)}(z^{(t)}, \tilde z) \sum_{z^{(0)} \in \mathcal{Z}} \frac{q_{t\mid 0}(\tilde z \mid z^{(0)})}{q_{t\mid 0}(z^{(t)} \mid z^{(0)})} q_{0\mid t}(z^{(0)} \mid z^{(t)}), \quad \text{for } z \neq \tilde z.
\]

Since the true reverse process $q_{0\mid t}(z^{(0)} \mid z^{(t)})$ is intractable, we approximate it using a neural network $p^\theta_{0\mid t}(z^{(0)} \mid z)$ parameterized by $\theta$, yielding the approximate reverse rate:
\[
    \hat R^{t,\theta}(z, \tilde z) = R^{(t)}(\tilde z, z) \sum_{z^{(0)} \in \mathcal{Z}} \frac{q_{t\mid 0}(\tilde z \mid z^{(0)})}{q_{t\mid 0}(z \mid z^{(0)})} p^\theta_{0\mid t}(z^{(0)} \mid z), \quad \text{for } z \neq \tilde z.
\]

Diffusion models are typically optimized by minimizing the negative ELBO on the negative log-likelihood, $- \log p^\theta_0(z^{(0)})$. \citet[Proposition 2]{campbell2022continuous} provides an expression for the ELBO. Although it is not used in this work, we include it for completeness:
\[
    \mathcal{L}_{CT}(\theta) = T \, \mathbb{E}_{t \sim \mathcal{U}([0,T]), q_{t}(z), r(z \mid \tilde z)}\left(\sum_{z' \neq \tilde z} \hat R^{t,\theta}(z, z') - \mathcal{Z}^{(t)}\log \left(\hat R^{t,\theta}(\tilde z, z)\right)\right) + C,
\]
where $C$ is a constant independent of $\theta$, $\mathcal{Z}^{(t)} = \sum_{z' \neq z} R^{(t)}(z, z')$, and $r(z \mid \tilde z) = 1 - \delta_{\tilde z, z} R^{(t)}(z, \tilde z)/\mathcal{Z}^{(t)}$.

For efficient optimization, it is essential to express $q_t(z) = q_{t\mid 0}(z \mid z^{(0)}) q_0(z^{(0)})$ in closed form. In this context, the transition matrix $\mR^{(t)}$ must be designed so that $q_{t\mid 0}(z \mid z^{(0)})$ has a closed-form expression. \cite{campbell2022continuous} established that when $\mR^{(t)}$ and $\mR^{(t')}$ commute for any $t$ and $t'$, the transition probability matrix can be written as:
\[
    \bar\mQ^{(t)} \coloneqq q_{t\mid 0}(z^{(t)} \mid z^{(0)}) = \exp\left(\int_0^t R^{(s)} ds\right)\!,
\]
where $(\bar \mQ^{(t)})_{ij} = q(z^{(t)} = i \mid z^0 = j)$. This condition is met when the rate is written as $\mR^{(t)} = \beta(t) \mR_b$, where $\beta$ is a time-dependent scalar and $\mR_b$ is a constant base rate matrix. In that case, the forward process can be further refined as:
\[
    (\bar\mQ^{(t)})_{kl} = q_{t\mid 0}(z^{(t)} = k \mid z^{(0)} = l) = \left( \mP \exp \left[ \boldsymbol \Lambda \int_0^t \beta(s) ds \right] \mP^{-1} \right)_{kl},
\]
where $\mR_b = \mP \boldsymbol \Lambda \mP^{-1}$ and $\exp$ refers to the element-wise exponential.

Given a one-hot encoded data sample $\vz^{(0)}$, we can sample a noisy state $\vz^{(t)}$ by sampling a categorical distribution with probability vector $\vz^{(0)}\bar\mQ^{(t)}$.

Since most real-world data is multi-dimensional, the above framework needs to be extended to $D$ dimensions. This is done by assuming that each dimension is noised independently so that the forward process factorizes as 
\begin{equation*}
    \formulti{t + d t}{t} = \prod_{d=1}^D \foruni{t + d t}{t},
\end{equation*}
where $\foruni{t + \Delta t}{t}$ is the unidimensional forward process on the $d$th dimension. 

\citet[Proposition 3]{campbell2022continuous} establishes how to write the forward and reverse rates in the $D$-dimensional case:
\begin{align*}
    R^{(t)}(\tilde \vz, \vz) & = \sum_{d=1}^D \delta_{\tilde{\vz}^{\textbackslash d}, \vz^{\textbackslash d}} R_d^{(t)}(\tilde z_d, z_d),\\
    \hat R^{t,\theta} (\vz, \tilde \vz) &= \sum_{d=1}^D \delta_{\tilde{\vz}^{\textbackslash d}, \vz^{\textbackslash d}} R_d^{(t)}(\tilde z_d, z_d) \sum_{z_0} \frac{q_{t\mid 0}(\tilde z_d\mid  z^0_d)}{q_{t\mid 0}(z_d\mid  z^0_d)} p^\theta_{0\mid t}(z^0_d\mid  \vz),
\end{align*}
for $z_d \neq \tilde z_d$.
In brief, assuming that a transition cannot occur in two different dimensions simultaneously, the multi-dimensional rates are equal to the unidimensional rates in the dimension of transition. Importantly, if the dimensions are independent in the forward process, they are not in the reverse process since the whole state is given as input in $p^\theta_{0\mid t}(z^0_d\mid \vz)$. 

Finally, we need a practical way to simulate the reverse process over finite time intervals for $D$-dimensional data. To that extent, we follow \cite{campbell2022continuous} and use the $\tau$-leaping algorithm. The first step is to sample $\vz^{(1)}$ from the prior $p_{\text{ref}}(\vz^{(1)})$. The sampling procedure is as follows. At each iteration, we keep $\vz^{(t)}$ and $\hat R^{t,\theta}(z, \tilde z)$ constant and simulate the reverse process for a time interval of length $\tau$. It means that we count all the transitions between $t$ and $t - \tau$ and apply them simultaneously.

The number of transitions in each dimension $z^{(t)}_d$ of the current state $\vz^{(t)}$, between $z^{(t)}_d$ and $\tilde z_d$ is Poisson distributed with mean $\tau \hat R^{t,\theta}(z^{(t)}_d, \tilde z_d)$. In a state space with no ordinal structure, multiple transitions in one dimension are meaningless, and we reject them. In addition, we experiment using the predictor-corrector scheme. After each predictor step using $\hat R^{(t),\theta}(z^{(t)}_d, \tilde z_d)$, we can also apply several corrector steps using the expression defined in \cite{campbell2022continuous}, i.e.,  $\hat R^{(t), c} = \hat R^{(t),\theta} + R^{(t)}$. The transitions using the corrector rate are counted the same way as for the predictor. This rate admits $q_t(z^{(t)})$ as its stationary distribution, which means that applying the corrector steps brings the distribution of noisy graphs at time $t$ closer to the marginal distribution of the forward process. 

\subsection{Noise schedule}\label{app:transitionmatrix}
Here, we provide a proof for \Cref{prop:1_app}, as well as some intuition on the prior distribution.
\begin{proposition}\label{prop:1_app}
    For a CTMC $(z^{(t)})_{t \in [0,1]}$ with rate matrix $\mR^{(t)} = \beta (t)\mR_b$ and $\mR_b = \mathds 1 \vm' - \mI$, the forward process can be written as
    \begin{equation*}
        \bar \mQ^{(t)} = e^{-\bar \beta^t}\mI + (1 - e^{-\bar \beta^t})\mathds 1 \vm',
    \end{equation*}
    where $(\bar \mQ^{(t)})_{ij} = q(z^{(t)} = i \mid z^{(0)} = j)$ and $\bar \beta^t = \int_0^t \beta (s)ds$.
\end{proposition}
\begin{proof}
    Since $\mathds 1 \vm'$ is a rank-one matrix with trace 1, it is diagonalizable and has only one non-zero eigenvalue, equal to $\tr (\mathds 1 \vm') = 1$.
    Therefore, 
    \begin{equation*}
        \mR_b = \mathds 1 \vm' - \mI = \mP \mD \mP^{-1} - \mI = \mP \left(\mD - \mI)\right)\mP^{-1},
    \end{equation*}
    with $\vec{D} = \diag (1, 0, \dots, 0)$. Denoting $\bar \beta^t = \int_0^t \beta (s)ds$,
    \begin{align*}
        \bar \mQ^{(t)} &= \mP\exp\left(\bar \beta^t (\mD - \mI)\right)\mP^{-1}\\
         &= \mP \left(\mD - e^{-\bar \beta^t}\mI - e^{-\bar \beta^t}\mD\right)\mP^{-1}\\
         &= e^{-\bar \beta^t}\mI + (1 - e^{-\bar\beta^t})\mathds 1 \vm'.
    \end{align*} 
\end{proof}

We now wish to elaborate on the link between \Cref{prop:1_app} and the choice of the prior distribution. Recall our noise schedule,
\begin{equation*}
    \beta(t) = \alpha\frac{\pi}{2} \sin\mleft(\frac{\pi}{2}t\mright) \quad \text{ and } \quad \int_0^t \beta (s) ds = \alpha \mleft(1 - \cos\mleft(\frac{\pi}{2}t \mright) \mright).
\end{equation*}
When $t = 1$, it holds that $e^{-\bar \beta^t} = e^{-\alpha}$, and therefore $\bar \mQ^{(1)} = e^{-\alpha}\mI + (1 - e^{-\alpha})\mathds 1 \vm'$. When $\alpha$ is large enough, then $e^{-\alpha} \approx 0$ and $\bar \mQ^{(1)} \approx \mathds 1 \vm'$. Denoting $(\bar \mQ^{(1)})_{j}$ the $j$-th row of $\bar \mQ^{(1)}$, it holds that  $(\bar \mQ^{(1)})_{j} = q(z^{(1)} \mid z^{(0)} = j) \approx \vm$. In other terms, whatever the value $z^{(0)}$, $z^{(1)}$ is sampled from the same categorical distribution with probability vector $\vm$. Therefore,
\begin{align*}
    q_1(z^{(1)}) &= \sum_{j \in \mathcal{Z}} q_{1\mid 0}(z^{(1)} \mid z^{(0)} = j) q_0(z^{(0)} = j)\\
    &\approx \sum_{j \in \mathcal{Z}} m q_0(z^{(0)} = j)\\
    &\approx m = p_{\text{ref}}(z^{(1)})
\end{align*}

In the $D$-dimensional case, since the forward process factorizes, we get 
\begin{equation*}
    q_{1\mid 0}(\vz^{(1)}\mid \vz^{(0)}) = \prod_{d=0}^{D} m = p_{\text{ref}}(\vz^{(1)}),
\end{equation*}
where $d\in \llbracket 0, D \rrbracket$ denotes $d$-th dimension of $\vz \in \mathcal{Z}^D$.

\subsection{RRWP properties}\label{rrwp}

In the following, we formally study the encodings of the RRWP encoding. 

\xhdr{Notations} A \new{graph} $G$ is a pair $(V(G),E(G))$ with \emph{finite} sets of \new{vertices} or \new{nodes} $V(G)$ and \new{edges} $E(G) \subseteq \{ \{u,v\} \subseteq V(G) \mid u \neq v \}$. If not otherwise stated, we set $n \coloneqq |V(G)|$, and the graph is of \new{order} $n$. For ease of notation, we denote the edge $\{u,v\} \in E(G)$ by $(u,v)$ or $(v,u)$. An $n$-order attributed graph is a pair $\mG=(G,\mX, \tE)$, where $G = (V(G),E(G))$ is a graph and $\vec{X} \in \{0, 1\}^{n\times a}$, for $a > 0$, is a \new{node feature matrix} and $\tE \in \mathbb \{0, 1\}^{n \times n \times b}$, for $b>0$, is an \new{edge feature tensor}. Here, we set $V(G) \coloneqq \llbracket 1,n \rrbracket$.
The \new{neighborhood} of $v \in V(G)$ is denoted by $N(v) \coloneqq  \{ u \in V(G) \mid (v, u) \in E(G) \}$.

In our experiments, we leverage the \new{relative random-walk probabilites} (RRWP) encoding, introduced in \cite{ma2023graph}. Denoting $\mA$ the adjacency matrix of a graph $G$, and $\mD$ the diagonal degree matrix, and $\mM = \mD^{-1}\mA$ the degree-normalized adjacency matrix,  for each pair of nodes $(i,j)$, the RRWP encoding computes \begin{equation}\label{eq:rrwp}
    P_{ij}^K \coloneq \left[I_{ij}, M_{ij}, M_{ij}^2, \dots, M_{ij}^{K-1}\right],
\end{equation}
where $K$ refers to the maximum length of the random walks. The entry $P_{ii}^K$ corresponds to the RWSE encoding of node $i$; therefore, we leverage them as node encodings. This encoding alone is sufficient to train our graph diffusion model and attain state-of-the-art results. 

In the following, we show that RWPP encoding can (approximately) determine if two nodes lie in the same connected components and approximate the size of the largest connected component.
\begin{proposition}\label{prop:cc}
For $n \in \mathbb{N}$, let $\mathcal{G}_n$ denote the set of $n$-order graphs and for a graph $G \in \mathcal{G}_n$ let $V(G) \coloneqq  \llbracket 1,n \rrbracket$. Assume that the graph $G$ has $c$ connected components and let  $\vec{C} \in \{0,1\}^{n \times c}$ be a matrix such the $i$th row $\vec{C}_{i \cdot}$ is a one-hot encoding indicating which of the $c$ connected components the vertex $i$ belongs to. Then, for any $\varepsilon > 0$, there exists a feed-forward neural network $\textsf{FNN} \colon \mathbb{R}^{n \times n} \to [0,1]^{n \times c}$ such that
\begin{equation*}
        \norm{\textsf{FNN}(\vec{M}^{n-1}) - \vec{C}}_{F} \le \varepsilon.
\end{equation*}
\end{proposition}
\begin{proof}
Let $\vec{R} \coloneqq \vec{M}^{n-1}$. First, since the graphs have $n$ vertices, the longest path in the graphs has length $n-1$. Hence, two vertices $v,w \in V(G)$, with $v \neq w$ are in the same connected component if, and only, if $R_{vw} \neq 0$. Hence, applying a sign activation function to $\vec{R}$ pointwisely, we get a matrix over $\{0,1\}$ with the same property. Further, by adding  $\vec{D}_n \in \{ 0,1\}^{n \times n }$, an $n\times n$ diagonal matrix with ones on the main diagonal, to this matrix, this property also holds for the case of $v = w$. In addition, there exists a permutation matrix $\vec{P}_n$ such that applying it to the above matrix results in a block-diagonal matrix $\vec{B} \in \{ 0,1 \}^{n \times n}$ such that $\vec{B}_{v \cdot} = \vec{B}_{w \cdot}$, for $v,w \in V(G)$, if, and only, if the vertices $v,w$ are in the same connected component. Since $n$ is finite, the number of such $\vec{B}$ matrices is finite and hence compact. Hence, we can find a continuous function mapping each possible row of $\vec{B}_{v \cdot}$, for $v \in V(G)$, to the corresponding one-hot encoding of the connected component. Since all functions after applying the sign function are continuous, we can approximate the above composition of functions via a two-layer feed-forward neural network leveraging the universal approximation theorem~\citep{Cyb+1992,Les+1993}.
\end{proof}
Similarly, we can also approximate the size of the largest component in a given graph.
\begin{proposition}
For $n \in \mathbb{N}$, let $\mathcal{G}_n$ denote the set of $n$-order graphs and for $G \in \mathcal{G}_n$ let $V(G) \coloneqq  \llbracket 1,n \rrbracket$. Assume that $S$ is the number of vertices in the largest connected component of the graph $G$. Then, for any $\varepsilon > 0$, there exists a feed-forward neural network $\textsf{FNN} \colon \mathbb{R}^{n \times n} \to [1,n]$, 
\begin{equation*}
        |\textsf{FNN}(\vec{M}^{n-1}) - S| \le \varepsilon.
\end{equation*}
\end{proposition}
\begin{proof}
By the proof of~\Cref{prop:cc}, we a get a block-diagonal matrix $\vec{B} \in \{ 0,1 \}^{n \times n}$, such that $B_{uv} = 1$ if, and only, if $u,v$ are in the same connected components. Hence, by column-wise summation, we get the number of vertices in each connected component. Hence, there is an $n \times 1$ matrix over $\{ 0,1 \}$, extracting the largest entry. Since all of the above functions are continuous, we can approximate the above composition of functions via a two-layer feed-forward neural network leveraging the universal approximation theorem~\citep{Cyb+1992,Les+1993}.  
\end{proof}
Moreover, we can show RRWP encodings can (approximately) count the number $p$-cycles, for $p < 5$, in which a node is contained. A $p$-cycle is a cycle on $p$ vertices. 
\begin{proposition}
For $n \in \mathbb{N}$, let $\mathcal{G}_n$ denote the set of $n$-order graphs and for $G \in \mathcal{G}_n$ let $V(G) \coloneqq  \llbracket 1,n \rrbracket$. Assume that $\vec{c} \in \mathbb{N}^n$ contains the number of $p$-cycles a node is contained in for all vertices in $G$, for $p \in \{3,4\}$.  Then, for any $\varepsilon > 0$, there exists a feed-forward neural network $\textsf{FNN} \colon \mathbb{R}^{n \times n} \to \mathbb{R}^{n}$, 
\begin{equation*}
       \norm{\textsf{FNN}(\vec{P}^{n-1}) - \vec{c}}_2 \le \varepsilon.
\end{equation*}
\end{proposition}
\begin{proof}
For $p\in\{ 3,4\}$, \citet[Appendix B.2]{vignac2022digress} provide simple linear-algebraic equations for the number of $p$-cycles each vertex of a given graph is contained based on powers of the adjacency matrix, which can be expressed as compositions of linear mappings, i.e., continuous functions. Observe that we can extract these matrices from $\vec{P}^{n-1}$. Further, note that the domain of these mappings is compact. Hence, we can approximate this composition of functions via a two-layer feed-forward neural network leveraging the universal approximation theorem~\citep{Cyb+1992,Les+1993}.  
\end{proof}

However, we can also show that RRWP encodings cannot detect if a node is contained in a large cycle of a given graph.  We say that an encoding, e.g., RRWP, counts the number of $p$-cycles for $p \geq 2$ if there do not exist two graphs, one containing at least one $p$-cycle while the other does not, while the RRWP encodings of the two graphs are equivalent. 
\begin{proposition}
For $p \geq 8$, the RRWP encoding does not count the number of $p$-cycles.
\end{proposition}
\begin{proof}
First, by~\citet{Rat+2023}, the RRWP encoding does not distinguish more pairs of non-isomorphic graphs than the so-called $(1,1)$-dimensional Weisfeiler--Leman algorithm. Secondly, the latter algorithm is strictly weaker than the $3$-dimensional Weisfeiler--Leman algorithm in distinguishing non-isomorphic graphs~\citep[Theorem 1.4]{Rat+2023}. However, by~\citet[Theorem 4]{Fur+17}, the $3$-dimensional Weisfeiler--Leman algorithm cannot count $8$-cycles. 
\end{proof}

Hence, the above proposition implies the following results.
\begin{corollary}
     For $p \geq 8$ and $K\geq 0$, there exists a graph $G$ containing a $p$-cycle $C$, and two vertex pairs $(r,s),(v,w) \in V(G)^2$ such that $(r,s)$ is contained in $C$ while $(v,w)$ is not and $P_{vw}^K = P_{rs}^K$.  
\end{corollary}

\subsection{Equivariance properties}\label{app:equi}
In this section, we prove that our model is equivariant (Proposition \ref{prop:equi}) and that our loss is permutation-invariant (Proposition \ref{prop:perminv}), relying on~\citet[Lemma 3.1 and  3.2]{vignac2022digress}. We also prove exchangeability with Proposition \ref{prop:exch}.

Let us start by defining the notation for a \new{graph permutation}. Denote $\pi$ a permutation, $\pi$ acts on the attributed graph $\mG = (G, \mX, \tE)$ as,
\begin{itemize}
    \item $\pi G = (V(\pi G), E(\pi G))$ where $V(\pi G) = \{\pi(1), \dots, \pi(n)\}$ and $E(\pi G) = \{(\pi(i), \pi(j)) \mid (v_i, v_j) \in E(G)\}$,
    \item $\pi \mX$ the matrix obtained by permutating the rows of $\mX$ according to $\pi$, i.e. $(\pi \mX)_i = \vx_{\pi^{-1}(i)}$,
    \item Similarly, $\pi \mE$ is the tensor obtained by the permutation of the components $e_{ij}$ of E according to $\pi$, i.e. $(\pi \mE)_{ij} = \ve_{\pi^{-1}(i)\pi^{-1}(j)}$.
\end{itemize}

\begin{proposition}[Equivariance]\label{prop:equi}
\textsc{DiGress}' graph transformer using RWSE as node encodings and RRWP as edge encodings is permutation equivariant.
\end{proposition}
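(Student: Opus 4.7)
The plan is to reduce the claim to two separate equivariance statements: (i) the \textsc{DiGress} graph transformer is permutation equivariant whenever the node and edge features it receives are themselves computed equivariantly from the input graph, and (ii) both the RWSE node encoding and the RRWP edge encoding transform equivariantly under vertex relabellings. Statement (i) is already established in \citet[Lemma 3.1]{vignac2022digress} at a level of generality that accommodates any equivariantly computed feature augmentation, so the real work is (ii), after which the result follows by composition.

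For (ii), the key observation is that for any permutation $\pi$ of $V(G)$, the adjacency and degree matrices satisfy $\pi \mA = \mP_\pi \mA \mP_\pi^{\top}$ and $\pi \mD = \mP_\pi \mD \mP_\pi^{\top}$, where $\mP_\pi$ is the corresponding permutation matrix. Since conjugation by $\mP_\pi$ commutes with inversion of the diagonal matrix $\mD$, one gets $\pi \mM = \mP_\pi \mM \mP_\pi^{\top}$ for $\mM = \mD^{-1} \mA$, and a one-line induction on $k$ extends this to $\pi \mM^{k} = \mP_\pi \mM^{k} \mP_\pi^{\top}$ for every $k \geq 0$. Stacking these powers coordinate-wise along the last axis as in~\eqref{eq:rrwp} then shows that the RRWP tensor $\vec{P}^{K}$ transforms as an edge feature under $\pi$, i.e.\ its $(i,j)$-entry is sent to the $(\pi^{-1}(i),\pi^{-1}(j))$-entry. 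The RWSE encoding is the diagonal $P^{K}_{ii}$, and therefore inherits node-level equivariance.

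Feeding these equivariantly computed features into the node- and edge-feature channels of the \textsc{DiGress} transformer and invoking (i) finishes the argument. The main technical nuisance I expect is handling invertibility of $\mD$ for graphs with isolated vertices: one either restricts to $\mD$ of full rank, or adopts the convention $M_{ii} := 0$ whenever $D_{ii} = 0$. In either case the conjugation identity above is preserved, so the argument goes through without modification.
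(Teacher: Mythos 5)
Your proposal is correct and follows essentially the same route as the paper: invoke the equivariance conditions established for the \textsc{DiGress} transformer in \citet{vignac2022digress} and substitute the equivariance of the RRWP/RWSE encodings for that of the original structural features. You go further than the paper by actually proving the encoding equivariance via the conjugation identity $\pi \vec{M}^{k} = \vec{P}_\pi \vec{M}^{k} \vec{P}_\pi^{\top}$ and by flagging the isolated-vertex degenerate case, both of which the paper leaves implicit.
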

\begin{proof}
    We recall the sufficient three conditions stated in~\cite{vignac2022digress} for ensuring permutation-equivariance of the \textsc{DiGress} architecture, namely,
    \begin{itemize}
        \item their set of structural and spectral features is equivariant.
        \item All the blocks of their graph transformer architecture are permutation equivariant.
        \item The layer normalization is equivariant.
    \end{itemize}
    Replacing the first condition with the permutation-equivariant nature of the RRWP-based node and edge encodings completes the proof.
\end{proof}
We now derive a more thorough proof of the permutation invariance of the loss compared to ~\citet[Lemma 3.2]{vignac2022digress}, relying on the permutation-equivariant nature of both the forward process and the denoising neural network.
\begin{proposition}[Permutation invariance of the loss] \label{prop:perminv}
    The cross-entropy loss defined in Equation \ref{eq:loss} is invariant to the permutation of the input graph $G^{(0)}$.
\end{proposition}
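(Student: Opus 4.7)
The plan is to reduce permutation invariance of the loss to two facts already at hand: (i) the permutation equivariance of the forward process (visible directly from the factorization $q_{t\mid 0}(\mG^{(t)}\mid \mG^{(0)}) = \prod_i q_{t\mid 0}(x_i^{(t)}\mid x_i^{(0)})\prod_{i<j} q_{t\mid 0}(e_{ij}^{(t)}\mid e_{ij}^{(0)})$ and the shared action of $\pi$ on rows of $\mX$ and on the tensor $\tE$), and (ii) the permutation equivariance of the denoising network $p^\theta_{0\mid t}(\,\cdot\,\mid \mG^{(t)})$ established in Proposition~\ref{prop:equi}. Concretely, I would first prove the pointwise identity
\begin{equation*}
\mathcal{L}_{\mathrm{CE}}(\pi \mG^{(0)}, \pi \mG^{(t)}) \;=\; \mathcal{L}_{\mathrm{CE}}(\mG^{(0)}, \mG^{(t)})
\end{equation*}
for every permutation $\pi$ and every pair $(\mG^{(0)}, \mG^{(t)})$, then lift this to the full expectation by a change of variables.

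For the pointwise step, note that the $i$th node feature of $\pi\mG^{(0)}$ is $x^{(0)}_{\pi^{-1}(i)}$ and, by equivariance of $p^\theta_{0\mid t}$, the $i$th node output of the denoiser on input $\pi\mG^{(t)}$ equals its $\pi^{-1}(i)$th node output on input $\mG^{(t)}$. Hence
\begin{equation*}
-\sum_{i=1}^n \log p^\theta_{0\mid t}\!\bigl((\pi \mX^{(0)})_i \,\bigm|\, \pi\mG^{(t)}\bigr) \;=\; -\sum_{i=1}^n \log p^\theta_{0\mid t}\!\bigl(x^{(0)}_{\pi^{-1}(i)} \,\bigm|\, \mG^{(t)}\bigr)_{\pi^{-1}(i)},
\end{equation*}
and reindexing by $j = \pi^{-1}(i)$ recovers the node term of $\mathcal{L}_{\mathrm{CE}}(\mG^{(0)},\mG^{(t)})$. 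The edge term is handled identically using the equivariant action of $\pi$ on pairs $(i,j)$ and the invariance of the sum $\sum_{i<j}$ under the induced action on unordered pairs.

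For the expectation step, fix $t$ and $\mG^{(0)}$. The forward factorization implies $q_{t\mid 0}(\pi \mG^{(t)}\mid \pi \mG^{(0)}) = q_{t\mid 0}(\mG^{(t)}\mid \mG^{(0)})$, so a change of variables $\mG^{(t)}\leftarrow \pi\mG^{(t)}$ gives
\begin{equation*}
\mathbb{E}_{q(\mG^{(t)}\mid \pi\mG^{(0)})}\!\bigl[\mathcal{L}_{\mathrm{CE}}(\pi\mG^{(0)}, \mG^{(t)})\bigr]
\;=\; \mathbb{E}_{q(\mG^{(t)}\mid \mG^{(0)})}\!\bigl[\mathcal{L}_{\mathrm{CE}}(\pi\mG^{(0)}, \pi\mG^{(t)})\bigr],
\end{equation*}
and the pointwise identity collapses the right-hand side to the expected loss at $\mG^{(0)}$. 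Taking the outer expectation over $t\sim\mathcal{U}([0,1])$ (which is independent of $\pi$) yields the claim.

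The only delicate step is the bookkeeping of indices in the node/edge log-terms: one must carefully distinguish the action of $\pi$ on the input graph (permuting rows of $\mX$ and slices of $\tE$) from its action on the output of the equivariant denoiser (permuting the per-node/per-edge categorical distributions). Once this is tracked consistently, everything else is a change-of-variables argument already used implicitly by~\citet{vignac2022digress}.
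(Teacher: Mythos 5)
Your proposal is correct and follows essentially the same route as the paper's proof: it uses the permutation equivariance of the forward process to perform a change of variables in the expectation, and the equivariance of the denoising network (Proposition~\ref{prop:equi}) together with a reindexing of the node and edge sums to establish the pointwise identity. Your version merely makes the index bookkeeping and the separation between the pointwise step and the change-of-variables step more explicit than the paper does.
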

\begin{proof}
    Given a graph $\mG=(G,\mX, \tE)$, we denote by $\hat\mG=(\hat G, \hat\mX, \hat\tE)$ the predicted clean graph by the neural network and $\pi G = (\pi G, \pi \mX, \pi \mE)$ a permutation of this graph, for arbitrary permutation $\pi$. Let us now establish that the loss function is permutation-invariant. We recall the loss function for a permutation $\pi$ of the clean data sample $G^{(0)}$ is
    \begin{equation*}
        \mathcal{L}_{\text{CE}} \coloneq \mathbb{E}_{t \sim [0,1],p_{\text{data}}(\pi G^{(0)}),q(\pi G^{(t)}\mid \pi G^{(0)})}\left[-\sum_i^n \log p^\theta_{0\mid t}(x^{(0)}_{\pi(i)}\mid  \pi G^{(t)}) - \lambda \sum_{i<j}^n \log p^\theta_{0\mid t}(e^{(0)}_{\pi(i)\pi(j)}\mid  \pi G^{(t)})\right].
    \end{equation*}
Because dimensions are noised independently, the true data distribution $p_{\text{data}}(\pi G^{(0)}) = p_{\text{data}}(G^{(0)})$ is permutation-invariant, and the forward process is permutation-equivariant. Thus, we can write,
    \begin{equation*}
        \mathcal{L}_{\text{CE}} \coloneq \mathbb{E}_{t \sim [0,1],p_{\text{data}}(G^{(0)}),q(G^{(t)}\mid G^{(0)})}\left[-\sum_i^n \log p^\theta_{0\mid t}(x^{(0)}_{\pi(i)}\mid  \pi G^{(t)}) - \lambda \sum_{i<j}^n \log p^\theta_{0\mid t}(e^{(0)}_{\pi(i)\pi(j)}\mid  \pi G^{(t)})\right].
    \end{equation*}
Using Proposition \ref{prop:equi}, we also have that $p^\theta_{0\mid t}(x^{(0)}_i\mid G^{(t)}) = p^\theta_{0\mid t}(x^{(0)}_{\pi(i)}\mid  \pi G^{(t)})$ and $p^\theta_{0\mid t}(e^{(0)}_{ij}\mid G^{(t)}) = p^\theta_{0\mid t}(e^{(0)}_{\pi(i)\pi(j)}\mid  \pi G^{(t)})$, which concludes the proof.
\end{proof}

Proposition \ref{prop:perminv} shows that, whatever permutation of the original graph we consider, the loss function remains the same, and so do the gradients. Hence, we do not have to consider all the permutations of the same graph during the optimization process.

\begin{proposition}[Exchangeability]\label{prop:exch}
    \textsc{Cometh} yields exchangeable distributions.
\end{proposition}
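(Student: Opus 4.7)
The plan is to prove exchangeability by induction over the steps of the tau-leaping reverse process, leveraging Proposition \ref{prop:equi} (permutation equivariance of the denoising network), the factorized form of the prior, and the dimension-wise structure of tau-leaping. The base case is immediate: at $t=1$ the prior distribution $\prod_i m_X(x_i) \prod_{i<j} m_E(e_{ij})$ factorizes into identical per-node and per-edge marginals that do not depend on the labels, so the initial noisy graph $\mG^{(1)}$ is exchangeable, and the size distribution, which is independent of labels, factors out cleanly.

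For the inductive step, it suffices to prove that the one-step tau-leaping kernel is permutation-equivariant, i.e., for any permutation $\pi$,
\[
p(\mG^{(t-\tau)} = \pi \tilde{\mG} \mid \mG^{(t)} = \pi \mG) = p(\mG^{(t-\tau)} = \tilde{\mG} \mid \mG^{(t)} = \mG),
\]
so that combining with the inductive hypothesis yields exchangeability of $\mG^{(t-\tau)}$. The per-node and per-edge reverse rates $\hat R^{t,\theta}_X, \hat R^{t,\theta}_E$ depend only on (i) the forward marginals $q_{t\mid 0}$, which are label-agnostic because $\mR_X, \mR_E$ are built from the shared vectors $\vm_X, \vm_E$, and (ii) the denoising network predictions $p^\theta_{0\mid t}$, which by Proposition \ref{prop:equi} transform equivariantly under $\pi$. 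Since tau-leaping draws transition counts independently per dimension from Poisson distributions parametrized by these rates, and the multi-transition rejection is applied dimension-wise, the resulting joint kernel is equivariant. The same argument extends to each corrector step because the corrector rate $\hat R^{(t),c} = \hat R^{(t),\theta} + R^{(t)}$ adds the forward rate, whose block-diagonal structure is determined entirely by $\vm_X, \vm_E$ and is itself invariant under node relabeling. Iterating this argument over all predictor and corrector steps from $t=1$ to $t=0$ then yields exchangeability of the generated distribution of $\mG^{(0)}$.

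The main technical subtlety will be bookkeeping for the edge indexing $i<j$: a permutation $\pi$ can reverse the orientation of an edge pair, but since the edge state space is symmetric in $(i,j)$ and the noise is symmetrized on the upper triangle as described in Section \ref{sec:model}, the induced action of $\pi$ on $\tE$ is well defined and the product $\prod_{i<j}$ is invariant under this action. Beyond this, the remaining step is a careful change-of-variables argument showing that independent Poisson draws with permutation-equivariant rate vectors produce a joint distribution over transition counts that is itself equivariant, which follows from the fact that products of Poisson densities depend on the rate vector only through the unordered multiset of entries keyed to their dimension index.
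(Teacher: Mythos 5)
Your proposal is correct and follows essentially the same route as the paper's proof: establish that the prior is permutation-invariant (it factorizes into identical per-node and per-edge marginals) and that each reverse step is permutation-equivariant because the forward marginals are label-agnostic, the denoising network is equivariant by Proposition~\ref{prop:equi}, and the $\tau$-leaping Poisson draws are made independently per dimension. Your version is somewhat more explicit than the paper's (it spells out the induction over steps, the corrector rate, and the $i<j$ edge-indexing bookkeeping), but the underlying argument is the same.
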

\begin{proof}
    To establish the exchangeability, we require two conditions,  a permutation-invariant prior distribution and an equivariant reverse process.
    \begin{itemize}
        \item Since nodes and edges are sampled i.i.d from the same distribution, our prior distribution is permutation-invariant, i.e., each permutation of the same random graph has the same probability of being sampled. Hence $p_{\text{ref}}(\pi G^{(T)}) = p_{\text{ref}}(G^{(T)})$.
        \item  It is straightforward to see that our reverse rate is permutation-equivariant regarding the joint permutations of $G^{(t)}$ and $G^{(0)}$. We illustrate this using the node reverse rate,
        \begin{equation*}
            R_X^t(\tilde x_i, x^{(t)}_i) \sum_{x_0} \frac{q_{t\mid 0}(\tilde x_i\mid  x^{(0)}_i)}{q_{t\mid 0}( x^{(t)}_i\mid  x^{(0)}_i)} p^\theta_{0\mid t}(x^{(0)}_i\mid  \mG^{(t)}).
        \end{equation*}
        The forward rate, as well as the forward process, is permutation-equivariant regarding the joint any permutation on $G^{(t)}$ and $G^{(0)}$, and the neural network is permutation-equivariant. Similarly, we can reason regarding the edge reverse rate. Therefore, the overall reverse rate is permutation-equivariant. Since we sample independently across dimensions, the $\tau$-leaping procedure is also permutation-equivariant.
    \end{itemize}

\end{proof}

\subsection{Classifier-free guidance}\label{app:cfguidance}
In the conditional generation setting, one wants to generate samples satisfying a specific property $\vy$, to which we refer as \emph{the conditioner}. For example, in text-to-image diffusion models, the conditioner consists of a textual description specifying the image the model is intended to generate. The most straightforward way to perform conditional generation for diffusion models is to inject the conditioner into the network---therefore modeling $p^{\theta}(z^{(t-1)}\mid z^{(t)}, \vy)$---hoping that the model will take it into account. However, the network might ignore $\vy$, and several efficient approaches to conditional generation for diffusion models were consequently developed.

The approach leveraged by \cite{vignac2022digress} to perform conditional generation is classifier-guidance. It relies on a trained unconditional diffusion model and a regressor, or classifier, depending on the conditioner, trained to predict the conditioner given noisy inputs. As mentioned in \cite{ho2021classifier}, it has the disadvantage of complicating the training pipeline, as a pre-trained classifier cannot be used during inference. 

To avoid training a classifier to guide the sampling process, \new{classifier-free guidance} has been proposed in \cite{ho2021classifier} and then adapted for discrete data in \cite{tang2022improved}. A classifier-free conditional diffusion model jointly trains a conditional and unconditional model through \new{conditional dropout}. That is, the conditioner is randomly dropped with probability $p_{\text{uncond}}$ during training, in which the conditioner is set to a null vector. However, \cite{tang2022improved} showed that learning the null conditioner jointly with the model's parameters is more efficient.

At the sampling stage, the next state is sampled through
\begin{equation}\label{eq:cfsampling}
    \log p^{\theta}(z^{(t-1)}\mid z^{(t)}, \vy) = \log p^{\theta}(z^{(t-1)}\mid z^{(t)}, \emptyset) + (s + 1)(\log p^{\theta}(z^{(t-1)}\mid z^{(t)}, \vy) - \log p^{\theta}(z^{(t-1)}\mid z^{(t)}, \emptyset)),
\end{equation}
where $s$ is the \new{guidance strength}. We refer to \cite{tang2022improved} for deriving the above expression for the sampling process.

Let us now explain how we apply classifier-free guidance in our setting. Denoting $\hat R^{t,\theta}(\mG, \tilde \mG \mid \vy)$, the conditional reverse rate can be written as
\begin{equation*}
    \hat R^{t,\theta}(\mG, \tilde \mG \mid \vy) = \sum_i \delta_{\mG^{\textbackslash x_i}, \tilde \mG^{\textbackslash x_i}}\hat R^{t,\theta}_X (x^{(t)}_i, \tilde x \mid \vy) + \sum_{i < j}  \delta_{\mG^{\textbackslash e_{ij}}, \tilde \mG^{\textbackslash e_{ij}}} \hat R^{t,\theta}_E (e^{(t)}_{ij}, \tilde e_{ij} \mid \vy),
\end{equation*}
and
\begin{equation*}
    \hat R^{t,\theta}_X (x^{(t)}_i, \tilde x) = R_X^t(\tilde x_i, x^{(t)}_i) \sum_{x_0} \frac{q_{t\mid 0}(\tilde x_i\mid  x^{(0)}_i)}{q_{t\mid 0}( x^{(t)}_i\mid  x^{(0)}_i)} p^\theta_{0\mid t}(x^{(0)}_i\mid  \mG^{(t)}, \vy), \text{ for } x^{(t)}_i \neq \tilde x_i,
\end{equation*}
and similarly for edges.
At the sampling stage, we first compute the unconditional probability distribution $p^\theta_{0\mid t}(x^{(0)}_i\mid  \mG^{(t)}, \emptyset)$, where $\emptyset$ denotes the learned null vector, then the conditional distribution $p^\theta_{0\mid t}(x^{(0)}_i\mid  \mG^{(t)}, \vy)$. These two distributions are combined in the log-probability space in the following way,
\begin{equation}\label{eq:cfguidance}
    \log p^{\theta}(z^{(t-1)}\mid z^{(t)}, \vy) = \log p^{\theta}(z^{(t-1)}\mid z^{(t)}, \emptyset) + (s + 1)(\log p^{\theta}(z^{(t-1)}\mid z^{(t)}, \vy) - \log p^{\theta}(z^{(t-1)}\mid z^{(t)}, \emptyset)).
\end{equation}
Finally, the distribution in  \Cref{eq:cfguidance} is exponentiated and plugged into the reverse rate.

\section{Implementation details}\label{sec:app_impl}

Here, we provide some implementation details.

\subsection{Noise schedule}\label{app:noiseschedule}
We plot our noise schedule against the constant noise schedule used for categorical data in \cite{campbell2022continuous} in Figure \ref{fig:noiseschedule}. Following Proposition \ref{prop:1}, we plot $\bar \alpha_t = e^{-\bar \beta_t}$ on the $y$-axis, quantifying the information level of the original data sample retained at time $t$. Similarly to \cite{nichol2021improved}, we can see that the constant noise schedule converges towards zero faster than the cosine schedule, hence degrading the data faster. In our experiments, we perform a hyperparameter search to select the best rate constant for each dataset, with $\alpha \in \{4, 5, 6\}$. Following \cite{campbell2022continuous}; we set a minimum time to $t_{\min} = 0.01T$ because the reverse rates are ill-conditioned close to $t=0$.

\begin{figure}
    \centering
    \includegraphics[scale=0.3]{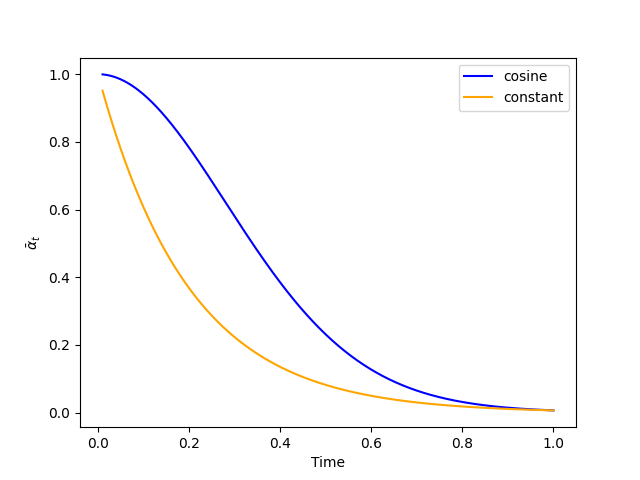}
    \caption{Comparison between our cosine noise schedule and the constant noise schedule proposed by \cite{campbell2022continuous}. Both schedules are plotted using a rate constant $\alpha = 5$.}
    \label{fig:noiseschedule}
\end{figure}

\subsection{Algorithms}
We provide the pseudo-code for the training and sampling from \textsc{Cometh} in \Cref{fig:algo}. Similar to \cite{campbell2022continuous}, we apply a last neural network pass at $t=t_{\min}$ and set the node and edge types to the types with the highest predicted probability. We omit the corrector steps in the sampling algorithm for conciseness. They are exactly the same as the predictor $\tau$-leaping steps, using the corrector rate $\hat R^{(t), c} = \hat R^{(t),\theta} + R^{(t)}$, and applied after time update, i.e. at $t -\tau$. Since those steps are sampled from different CTMC with rate $\hat R^{(t), c}$, we have control over $\tau$ when applying corrector steps. We provide additional details on the choice of this hyperparameter, denoted as $\tau_c$, in \Cref{sec:app_exp}.

\begin{figure}[t]
\centering
        \scalebox{0.8}{\begin{algorithm}[H]
        \caption{Training}\label{alg:training}
        \KwIn{A graph $G = (\mX, \mE)$}
        Sample $t \sim \mathcal U([0,1])$  \\
        Sample $G^t \sim \mX \bar \mQ^t_X \times \tE \bar \mQ^t_E$ \Comment*[f]{Sample sparse noisy graph}\\
        Predict $p^\theta_{0\mid t}(\mG\mid  \mG^{(t)})$ \Comment*[f]{Predict clean graph using neural network}\\
        
        $\mathcal{L}_{CE} \gets -\sum_i^n \log p^\theta_{0\mid t}(x^{(0)}_i\mid  \mG^{(t)}) - \lambda \sum_{i<j}^n \log p^\theta_{0\mid t}(e^{(0)}_{ij}\mid \mG^{(t)})$ \\
        Update $\theta$ using $\mathcal{L}_{CE}$
        \end{algorithm}}
        \scalebox{0.8}{
        \begin{algorithm}[H]
        \caption{$\tau$-leaping sampling of Cometh}\label{alg:Sampling}
        Sample $n$ from the training data distribution \\
        Sample $G^{(T)}  \sim \prod_i m_X \prod_{ij} m_E$ \Comment*[f]{Sample random graph from prior distribution}\\
        \While{$t > 0.01$}{
            \For{$i=1$ \KwTo $n$}{
                \For{$\tilde x$ in $\mathcal{X}$}{
                    $\hat R^{t,\theta}_X (x^{(t)}_i, \tilde x) = R_X^t(\tilde x_i, x^{(t)}_i) \sum_{x_0} \frac{q_{t\mid 0}(\tilde x_i\mid  x^{(0)}_i)}{q_{t\mid 0}( x^{(t)}_i\mid  x^{(0)}_i)} p^\theta_{0\mid t}(x^{(0)}_i\mid  G^{(t)}), \text{ for } x^{(t)}_i \neq \tilde x_i$\\
                    Sample $j_{x^{(t)}_i, \tilde x} \sim \mathcal{P}(\tau \hat R^{t,\theta}_X (x^{(t)}_i, \tilde x))$ \Comment*[f]{Count transitions on node i}
                }
            }
            \For{$i,j=1$ \KwTo $n$, $i < j$}{
                \For{$\tilde e$ in $\mathcal{E}$}{
                    $\hat R^{t,\theta}_E (e^{(t)}_{ij}, \tilde e_{ij}) = R_E^t(\tilde e_{ij}, e^{(t)}_{ij}) \sum_{e_0} \frac{q_{t\mid 0}(\tilde e_{ij}\mid  e^{(0)}_{ij})}{q_{t\mid 0}(e^{(t)}_{ij}\mid  e^{(0)}_{ij})} p^\theta_{0\mid t}(e^{(0)}_{ij}\mid  G^{(t)}), \text{ for } e^{(t)}_{ij} \neq \tilde e_{ij}$\\
                    Sample $j_{e^{(t)}_{ij}, \tilde e_{ij}} \sim \mathcal{P}(\tau \hat R^{t,\theta}_E (e^{(t)}_{ij}, \tilde e_{ij}))$ \Comment*[f]{Count transitions on edge ij}
                }
            }
            \For{$i=1$ \KwTo $n$}{
                \For{$\tilde x$ in $\mathcal{X}$}{
                    \If{$j_{x^{(t)}_i, \tilde x} = 1$ and $\sum_{\tilde x} j_{x^{(t)}_i, \tilde x} = 1$}{
                        $x^{(t - \tau)}_i = \tilde x$ \Comment*[f]{Apply unique transition or discard}
                    }
                }
            }
            \For{$i, j=1$ \KwTo $n$, $i < j$}{
                \For{$\tilde e$ in $\mathcal{E}$}{
                    \If{$j_{e^{(t)}_{ij}, \tilde e} = 1$ and $\sum_{\tilde e} j_{e^{(t)}_{ij}, \tilde e} = 1$}{
                        $e^{(t - \tau)}_{ij} = \tilde e$ \Comment*[f]{Apply unique transition or discard}
                    }
                }
            }
            $t \gets t - \tau$\\
        }
        $G^0 \gets \prod_i \argmax p^\theta_{0\mid t}(x^{(0)}_i\mid  G^{(t)}) \prod_{ij} \argmax p^\theta_{0\mid t}(e^{(0)}_{ij}\mid  G^{(t)})$ \Comment*[f]{Last pass}\\
        \Return{$G^0$}
        \end{algorithm}}
\caption{Training and Sampling algorithms of \textsc{Cometh}}
\label{fig:algo}
\end{figure}

\subsection{Graph transformer}
See Figure \ref{fig:digress} for an overview of the used graph transformer, building on \cite{vignac2022digress}. We use the RRWP encoding, defined in Equation \ref{eq:rrwp}, for synthetic graph generation. For molecule generation datasets, we additionally compute several molecular features used in \cite{vignac2022digress}, namely the valency and charge for each node and the molecular weight.

\begin{figure}[!htbp]
\centering
\definecolor{gray}{HTML}{CCCCCC}

\begin{tikzpicture}[transform shape, scale=0.6]
\clip (1.8, 0.1) rectangle (21, -15.5); 

\node at (4,-0.4) {\textbf{E'}};
\draw[-stealth] (4,-1.2) -- (4,-0.7);
\filldraw[fill=white] (2.8,-1.2) rectangle (5.2,-2.0);
\node at (4,-1.6) {\textbf{Lin}};
\draw[-stealth] (4,-2.9) -- (4,-2.0);
\filldraw[fill=white] (2.5,-2.9) rectangle (5.5,-3.7);
\node at (4,-3.3) {\textbf{FiLM}};
\draw[-stealth] (4.55,-4.6) -- (4.55,-3.7);
\draw[-stealth] (2.9,-4.9) -- (2.9,-3.7);
\filldraw[fill=white] (3.6,-4.6) rectangle (5.5,-5.4);
\node at (4.55,-4.85) {\textbf{Flatten}};
\node at (4.55,-5.15) {\textbf{heads}};
\node at (2.9,-5.2) {\textbf{y}};

\node at (8,-0.4) {\textbf{X'}};
\draw[-stealth] (8,-1.2) -- (8,-0.7);
\filldraw[fill=white] (6.8,-1.2) rectangle (9.2,-2.0);
\node at (8,-1.6) {\textbf{Lin}};
\draw[-stealth] (8,-2.9) -- (8,-2.0);
\filldraw[fill=white] (6.5,-2.9) rectangle (9.5,-3.7);
\node at (8,-3.3) {\textbf{FiLM}};
\draw[-stealth] (8.55,-4.6) -- (8.55,-3.7);
\draw[-stealth] (6.9,-4.9) -- (6.9,-3.7);
\filldraw[fill=white] (7.6,-4.6) rectangle (9.5,-5.4);
\node at (8.55,-4.85) {\textbf{Flatten}};
\node at (8.55,-5.15) {\textbf{heads}};
\node at (6.9,-5.2) {\textbf{y}};

\filldraw[fill=gray] (2,-6) rectangle (12,-14.55);

\draw[-stealth] (8.55,-6.4) -- (8.55,-5.4);
\draw[-stealth] (4.55,-9.4) -- (4.55,-5.4);

\filldraw[fill=white] (7.1,-6.4) rectangle (10,-7.2);
\node at (8.55,-6.8) {$\displaystyle \boldsymbol{\sum} \boldsymbol{\alpha}_{ij} \boldsymbol{v}_j$};

\draw[-stealth] (9.55,-7.9) -- (9.55,-7.2);
\draw[-stealth] (7.55,-7.9) -- (7.55,-7.2);

\filldraw[fill=white] (8.5,-7.9) rectangle (10.6,-8.7);
\node at (9.55,-8.3) {\textbf{Lin}};
\node at (9.55,-9.7) {\textbf{X}};

\filldraw[fill=white] (5.5,-7.9) rectangle (8,-8.7);
\node at (6.75,-8.3) {\textbf{Softmax}};

\draw[-stealth] (9.55,-9.4) -- (9.55,-8.7);
\draw[-stealth] (6.1,-9.4) -- (6.1,-8.7);

\filldraw[fill=white] (3.5,-9.4) rectangle (7.15,-10.2);

\node at (5.325,-9.8) {\textbf{FiLM}};
\draw[-stealth] (5.325,-11.6) -- (5.325,-10.2);
\draw[-stealth] (4.2,-10.9) -- (4.2,-10.2);
\node at (4.2,-11.2) {\textbf{E}};

\filldraw[fill=white] (2.9,-11.6) rectangle (7.75,-12.6);
\node at (5.325,-12.1) {\textbf{Outer product + scaling}};

\draw[-stealth] (3.625,-13.3) -- (3.625,-12.6);
\draw[-stealth] (7.025,-13.3) -- (7.025,-12.6);

\filldraw[fill=white] (2.725,-13.3) rectangle (4.525,-14.1);
\node at (3.625,-13.7) {\textbf{Lin}};
\draw[-stealth] (3.625,-14.8) -- (3.625,-14.1);
\node at (3.625,-15.1) {\textbf{X}};

\filldraw[fill=white] (6.15,-13.3) rectangle (7.925,-14.1);
\node at (7.025,-13.7) {\textbf{Lin}};
\draw[-stealth] (7.025,-14.8) -- (7.025,-14.1);
\node at (7.025,-15.1) {\textbf{X}};

\node at (15.75,-2.4) {\textbf{X}};
\node at (17.75,-2.4) {\textbf{E}};
\draw[-stealth] (17.75,-3.2) -- (17.75,-2.7);
\draw[-stealth] (15.75,-3.2) -- (15.75,-2.7);
\filldraw[fill=white] (13.5,-3.2) rectangle (20,-3.8);
\node at (16.75,-3.55) {\textbf{Node/ Edge - wise MLP}};
\draw[-stealth] (16.75,-4.3) -- (16.75,-3.8);
\filldraw[fill=white] (13.5,-4.3) rectangle (20,-4.9);
\node at (16.75,-4.65) {\textbf{GT Layer}};
\draw[-stealth] (16.75,-5.4) -- (16.75,-4.9);
\filldraw[fill=white] (13.5,-5.4) rectangle (20,-6);
\node at (16.75,-5.75) {\textbf{$\boldsymbol{\dots}$}};
\draw[-stealth] (16.75,-6.5) -- (16.75,-6);
\filldraw[fill=white] (13.5,-6.5) rectangle (20,-7.1);
\node at (16.75,-6.85) {\textbf{GT Layer}};
\draw[-stealth] (16.75,-7.6) -- (16.75,-7.1);
\filldraw[fill=white] (13.5,-7.6) rectangle (20,-8.2);
\node at (16.75,-7.95) {\textbf{Node/ Edge - wise MLP}};
\draw[-stealth] (16.75,-8.9) -- (16.75,-8.2);
\draw[-stealth] (18.75,-8.9) -- (18.75,-8.2);
\draw[-stealth] (14.75,-8.9) -- (14.75,-8.2);

\draw (14.75,-9.15) circle [radius=0.25];
\draw (16.75,-9.15) circle [radius=0.25];
\draw (18.75,-9.15) circle [radius=0.25];

\node at (14.74,-9.15) {$\boldsymbol{\Vert}$};
\node at (16.74,-9.15) {$\boldsymbol{\Vert}$};
\node at (18.74,-9.15) {$\boldsymbol{\Vert}$};

\node at (15.75,-11.6) {\textbf{X'}};
\draw[-stealth] (15.75,-11.3) -- (15.75,-9.15) -- (15,-9.15);
\draw[-stealth] (14.75,-10.1) -- (14.75,-9.4);
\node at (14.75,-10.4) {\textbf{X}};
\draw[-stealth] (15.75,-12.6) -- (15.75,-11.9);

\node at (17.75,-11.6) {\textbf{E'}};
\draw[-stealth] (17.75,-11.3) -- (17.75,-9.15) -- (17,-9.15);
\draw[-stealth] (16.75,-10.1) -- (16.75,-9.4);
\node at (16.75,-10.4) {\textbf{E}};
\draw[-stealth] (17.75,-12.6) -- (17.75,-11.9);

\node at (19.75,-11.6) {\textbf{y'}};
\draw[-stealth] (19.75,-11.3) -- (19.75,-9.15) -- (19,-9.15);
\draw[-stealth] (18.75,-10.1) -- (18.75,-9.4);
\node at (18.75,-10.4) {\textbf{y}};
\draw[-stealth] (19.75,-12.6) -- (19.75,-11.9);

\filldraw[fill=white] (14.75,-12.6) rectangle (20.75,-14.1);
\node at (17.75,-13.35) {\textbf{Encoding/ Features}};

\draw[-stealth] (18.75,-14.8) -- (18.75,-14.1);
\draw[-stealth] (17.75,-14.8) -- (17.75,-14.1);
\draw[-stealth] (16.75,-14.8) -- (16.75,-14.1);

\node at (16.75,-15.1) {\textbf{X}};
\node at (17.75,-15.1) {\textbf{E}};
\node at (18.75,-15.1) {\textbf{y}};

\end{tikzpicture}
\caption{Overview of \textsc{DiGress} graph transformer.}
\label{fig:digress}
\end{figure}

\section{Experimental details}\label{sec:app_exp}

Here, we outline the details of your experimental study.

\subsection{Synthetic graph generation}\label{app:synth}
We evaluate our method on two datasets from the SPECTRE benchmark \citep{martinkus2022spectre}, with 200 graphs each. \textsc{Planar} contains planar graphs of 64 nodes, and \textsc{SBM} contains graphs drawn from a stochastic block model with up to 187 nodes. We use the same split as the original paper, which uses 128 graphs for training, 40 for training, and the rest as a validation set. Similar to \cite{jo2024graph}, we apply random permutations to the graphs at each training epoch.

We report five metrics from the SPECTRE benchmark, which include four \textbf{MMD} metrics between the test set and the generated set and the \textbf{VUN} metric on the generated graphs. The MMD metrics measure the \textbf{Maximum Mean Discrepancy} between statistics from the test and the generated set, namely the \textbf{Degree (Degree)} distribution, the \textbf{Clustering coefficient (Cluster)} distribution, the count of orbit information regarding subgraphs of size four \textbf{Orbit (Orb.)} and the \textbf{eigenvalues (Spectrum)} of the graph Laplacian. The \textbf{Valid, Unique, and Novel (VUN)} metric measures the percentage of valid, unique, and non-isomorphic graphs to any graph in the training set.

On \textsc{Planar}, we report results using $\tau = 0.002$, i.e. using 500 $\tau$-leaping. We also evaluate our model using 10 corrector steps after each predictor step when $t < 0.1T$, with $\tau = 0.002$, for a total of 1000 $\tau$-leaping steps. We found our best results using $\tau_c = 0.7$.

On \textsc{SBM}, we report results using $\tau = 0.001$, i.e., using 1\,000 $\tau$-leaping steps.

\subsection{Small molecule generation : QM9}\label{app:qm9}
We evaluate our model on QM9 (\cite{wu2018moleculenet}) to assess the ability of our model to model attributed graph distributiond. The molecules are kekulized using the RDKit library and hydrogen atoms are removed, following the standard preprocessing pipeline for this dataset. Edges can have three types, namely simple bonds, double bonds, and triple bonds, as well as one additional type for the absence of edges. The atom types are listed in \Cref{tab:molds}.

We use the same split as \cite{vignac2022digress}, i.e., 100k molecules for training, 13k for testing, and the rest (20\,885 molecules) as a validation set. We choose this split over the one proposed in \cite{jo2022score} because it leaves a validation set to evaluate the ELBO and select the best checkpoints to minimize this quantity. In consequence, our training dataset contains roughly 20k molecules, which is less than what most graph generation works use.

At the sampling stage, we generate 10k molecules. We evaluate four metrics. The \textbf{Validity} is evaluated by sanitizing the molecules and converting them to SMILES string using the RDKit library. The largest molecular fragment is selected as a sample if it is disconnected. We then evaluate the \textbf{Uniqueness} among valid molecules. As stated in \cite{vignac2022digress}, evaluating novelty on QM9 bears little sense since this dataset consists of an enumeration of all stable molecules containing the atom above types with size nine or smaller. We also evaluate the \textbf{Fréchet ChemNet Distance (FCD)}, which embeds the generated set and the test set using the ChemNet neural network and compares the resulting distributions using the Wasserstein-2 distance (\cite{preuer2018frechet}). Finally, we evaluate the \new{Neighborhood Subgraph Pairwise Distance Kernel} (NSPDK) between the test set and the generated, which measures the structural similarities between those two distributions.

\subsection{Molecule generation on large datasets}\label{app:large}
We further evaluate \textsc{Cometh} on two large molecule generation benchmarks, MOSES~\citep{polykovskiy2020molecular} and GuacaMol~\citep{brown2019guacamol}. The molecules are processed the same way as for QM9 and present the same edge types. The atom types for both datasets are listed in \Cref{tab:molds}. The filtration procedure for the GuacaMol consists of converting the SMILES into graphs and retrieving the original SMILES. The molecules for which this conversion is not possible are discarded. We use the code of \cite{vignac2022digress} to perform this procedure. We use the standard split provided for each dataset.

Both datasets are accompanied by their own benchmarking libraries. For GuacaMol, we use the distribution learning benchmark. During the sampling stage, we generate 25k molecules for MOSES and 18k for GuacaMol, which is sufficient for both datasets, as they evaluate metrics based on 10k molecules sampled from the generated SMILES provided.

We then elaborate on the metrics for each dataset. For both datasets, we report \textbf{Validity}, defined in the same manner as for QM9, the percentage of \textbf{Valid and Unique (Val. \& Uni.)} samples, and the percentage of \textbf{Valid, Unique, and Novel (VUN)} samples. We prefer the latter two metrics over Uniqueness and Novelty alone, as they provide a better assessment of a model's performance compared to separately reporting all three metrics (Validity, Uniqueness, and Novelty). The MOSES benchmark also computes metrics by comparing the generated set to a scaffold test set, from which we report the \textbf{Fréchet ChemNet Distance (FCD)}, the \textbf{Similarity to the nearest neighbor (SNN)}, which computes the average Tanimoto similarity between the molecular fingerprints of a generated set and the fingerprints of the molecules of a reference set, and \textbf{Scaffold similarity (Scaf}), which compares the frequencies of the Bemis-Murcko scaffolds in the generated set and a reference set. Finally, we report the \textbf{Filters} metrics, which indicate the percentage of generated molecules successfully passing the filters applied when constructing the dataset. The GuacaMol benchmark computes two \new{scores}, the \textbf{Fréchet ChemNet Distance (FCD)} score and the \textbf{KL divergences (KL)} between the distributions of a set of physicochemical descriptors in the training set and the generated set.

We report results using $\tau = 0.002$, i.e., 500 denoising steps on both datasets. The experiments using the predictor-corrector were performed using $\tau = 0.002$ and 10 corrector steps for a total of 500 denoising steps. For both datasets, we used $\tau_c = 1.5$.

\begin{table}\
    \centering
        \caption{Details of the molecular datasets. The number of molecules for GuacaMol is computed after filtration.}
    \begin{tabular}{c c c c}
    \textbf{Dataset} & \textbf{Number of molecules} & \textbf{Size} & \textbf{Atom types} \\
    \toprule
    QM9 & $133\,885$ & 1 to 9 & C, N, O, F \\
    MOSES & $1\,936\,962$ & 8 to 27 & C, N, S, O, F, Cl, Br \\
    GuacaMol & $1\,398\,223$ & 2 to 88 & C, N, O, F, B, Br, Cl, I, P, S, Se, Si \\
    \bottomrule
    \end{tabular}
    \label{tab:molds}
\end{table}

\subsection{Conditional generation}
We perform conditional generation experiments on QM9, targeting two properties, the \textbf{dipole moment} $\mu$ and the \textbf{highest occupied molecular orbital energy (HOMO)}. They are well suited for conditional generation evaluation because they can be estimated using the Psi4 library~\citep{smith2020psi4}. We trained models sweeping over $p_{\text{uncond}} \in \{0.1, 0.2\}$, and explore different values for $s$ in $\llbracket 1, 6 \rrbracket$ during sampling. We obtained our best results using $p_{\text{uncond}} = 0.1$ and $s = 1$.

During inference, we evaluated our method in the same setting as~\cite{vignac2022digress}. We sampled 100 molecules from the test set, extracted their dipole moment and HOMO values, and generated 10 molecules targeting those properties. We estimated the HOMO energy and the dipole moment of the generated molecules, and we report the \textbf{Mean Absolute Error (MAE)} between the estimated properties and the corresponding targets.

To efficiently incorporate the conditioner $\vy$, we implemented a couple of ideas proposed in~\cite{ninniri2023classifier}. Instead of using $\vy$ solely as a global feature, we incorporated it as an additional feature for each node and edge. Additionally, we trained a two-layer neural network to predict the size of the molecule given the target properties rather than sampling it from the empirical dataset distribution. Our empirical observations indicate that this approach enhances performance. As of the time of writing, no official implementation has been released for~\cite{ninniri2023classifier}, rendering it impossible to reproduce their results. Additionally, since they do not report validity in their experiments on QM9, we choose not to include their results as a baseline to avoid unfair comparisons.

\subsection{Compute Ressources}\label{app:compute}
Experiments on QM9, \textsc{Planar}, and \textsc{SBM} were carried out using a single V100 or A10 GPU at the training and sampling stage. We trained models on MOSES or Guacamol using three A100 GPUs. To sample from these models, we used a single A100 GPU.

\section{Additional Experiments}\label{app:addexp}
To demonstrate why the continuous-time approach allows trade sampling quality and efficiency, we perform an ablation study on $\tau$. Results are presented in \Cref{tab:abl_abstract,tab:abl_qm9,tab:abl_moses}. We report the number of model evaluations equal to $1/\tau$ instead of $\tau$ for readability.

Overall, we observe that the model achieves decent performance across all datasets with just 50 steps. Increasing the number of model evaluations to 500-700 enhances performance to a state-of-the-art level. Beyond this point, performance saturates, and models using 1000 steps do not necessarily outperform those with fewer evaluations, as seen with \textsc{SBM} and \textsc{Planar}.
\begin{table}[!h]
    \caption{\textbf{Ablation study on the number of steps for synthetic graphs}. We report the mean of 5 runs, as well as 95\% confidence intervals. The best results are highlighted in bold.}
    \centering
    \scalebox{0.8}{
    \begin{tabular}{l c c c c c}
    \toprule
        \textbf{Number of steps}    &  \textbf{Degree} $\downarrow$ & \textbf{Cluster} $\downarrow$ & \textbf{Orbit} $\downarrow$ & \textbf{Spectrum} $\downarrow$ & \textbf{VUN} [\%] $\uparrow$ \\ \midrule 
    \multicolumn{6}{c}{\emph{Planar graphs}} \\
    \midrule
        \textbf{10}   & 103.0$\spm{6.8}$ & 11.0$\spm{0.1}$ & 305.4$\spm{10.6}$ & 5.3$\spm{0.2}$ & 0.0$\spm{0.0}$   \\
        \textbf{50}   & 3.6$\spm{1.3}$   & 3.3$\spm{0.2}$  & 12.4$\spm{4.5}$   & 1.3$\spm{0.2}$ & 41.5$\spm{4.51}$ \\
        \textbf{100}  & 3.1$\spm{1.2}$   & 2.3$\spm{0.3}$  & 5.2$\spm{2.5}$    & 1.4$\spm{0.2}$ & 76.0$\spm{2.23}$ \\
        \textbf{300}  & 3.4$\spm{1.1}$   & 1.7$\spm{0.5}$  & 6.2$\spm{3.9}$    & 1.3$\spm{0.1}$ & 86.5$\spm{3.82}$ \\
        \textbf{500}  & 2.1$\spm{1.3}$   & \textbf{1.5}$\spm{0.3}$  & 3.1$\spm{3.0}$    & 1.2$\spm{0.2}$ & 92.5$\spm{3.67}$ \\
        \textbf{700}  & 2.4$\spm{1.2}$   & \textbf{1.5}$\spm{0.2}$  & \textbf{2.2}$\spm{1.2}$    & \textbf{1.1}$\spm{0.2}$ & \textbf{94.0}$\spm{2.23}$ \\
        \textbf{1000} & \textbf{1.9}$\spm{1.0}$   & 1.9$\spm{0.2}$  & 2.7$\spm{1.7}$    & 1.5$\spm{0.2}$ & 89.5$\spm{3.51}$\\
    \midrule
    \multicolumn{6}{c}{\emph{Stochastic block model}} \\
    \midrule
        \textbf{10}   & 166.6$\spm{16.1}$ & 1.8$\spm{0.1}$ & 3.3$\spm{0.2}$ & 2.2$\spm{0.2}$ & 14.0$\spm{4.72}$ \\
        \textbf{50}   & 2.6$\spm{0.6}$    & \textbf{1.5}$\spm{0.0}$ & 1.9$\spm{0.2}$ & 0.9$\spm{0.1}$ & 62.0$\spm{5.44}$ \\
        \textbf{100}  & \textbf{1.4}$\spm{0.7}$    & \textbf{1.5}$\spm{0.0}$ & 1.7$\spm{0.2}$ & 0.9$\spm{0.1}$ & 70.5$\spm{5.26}$ \\
        \textbf{300}  & 2.4$\spm{0.6}$    & \textbf{1.5}$\spm{0.0}$ & 1.8$\spm{0.3}$ & \textbf{0.8}$\spm{0.0}$ & 65.5$\spm{5.94}$ \\
        \textbf{500}  & 2.4$\spm{1.1}$    & \textbf{1.5}$\spm{0.0}$ & 1.7$\spm{0.2}$ & \textbf{0.8}$\spm{0.1}$ & \textbf{77.0}$\spm{5.26}$ \\
        \textbf{700}  & 2.6$\spm{0.9}$    & \textbf{1.5}$\spm{0.0}$ & \textbf{1.6}$\spm{0.1}$ & 0.9$\spm{0.1}$ & 69.0$\spm{4.06}$ \\
        \textbf{1000} & 1.8$\spm{0.7}$    & \textbf{1.5}$\spm{0.0}$ & 1.7$\spm{0.4}$ & \textbf{0.8}$\spm{0.1}$ & 67.5$\spm{3.10}$  \\
     \bottomrule
    \end{tabular}}
    \vspace{-10pt}
    \label{tab:abl_abstract}
\end{table}

\begin{table}[!h]
    \centering
      \caption{\textbf{Ablation study on the number of steps for QM9.} We report the mean of 5 runs, as well as 95\% confidence intervals.}
      \scalebox{0.8}{
    \begin{tabular}{l c c c c c}
    \toprule
     \textbf{Number of steps}    &  \textbf{Validity} $\uparrow$ & \textbf{Uniqueness} $\uparrow$ & \textbf{Valid} \& \textbf{Unique} $\uparrow$ & \textbf{FCD} $\downarrow$ & \textbf{NSPDK} $\downarrow$ \\ \midrule 
        \textbf{10}   & 88.69$\spm{0.36}$ & 98.57$\spm{0.13}$ & 87.42$\spm{0.46}$    & 0.84$\spm{0.02}$ & 0.001$\spm{0.0}$ \\
        \textbf{50}   & 99.07$\spm{0.05}$ & 96.78$\spm{0.16}$ & 95.88$\spm{0.21}$    & 0.25$\spm{0.01}$ & 0.000$\spm{0.0}$ \\
        \textbf{100}  & 99.42$\spm{0.06}$ & 96.81$\spm{0.06}$ & 96.24$\spm{0.05}$    & 0.26$\spm{0.01}$ & 0.000$\spm{0.0}$ \\
        \textbf{300}  & 99.53$\spm{0.02}$ & 96.57$\spm{0.12}$ & 96.12$\spm{0.11}$    & 0.25$\spm{0.01}$ & 0.000$\spm{0.0}$ \\
        \textbf{500}  & 99.57$\spm{0.07}$ & 96.76$\spm{0.17}$ & 96.34$\spm{0.2}$     & 0.25$\spm{0.01}$ & 0.000$\spm{0.0}$ \\
        \textbf{700}  & 99.53$\spm{0.05}$ & 96.65$\spm{0.15}$ & 96.2$\spm{0.15}$     & 0.25$\spm{0.01}$ & 0.000$\spm{0.0}$ \\
        \textbf{1000} & 99.57$\spm{0.07}$ & 96.79$\spm{0.08}$ & 96.37$\spm{0.14}$    & 0.25$\spm{0.01}$ & 0.000$\spm{0.0}$ \\
     \bottomrule
    \end{tabular}}
    \label{tab:abl_qm9}
\end{table}

\begin{table}[!h]
    \centering
    \caption{\textbf{Ablation study on the number of steps for MOSES} We report the mean of five runs, as well as 95\% confidence intervals. The best results are highlighted in bold.}
        \label{tab:abl_moses}
        \scalebox{0.8}{
    \begin{tabular}{l c c c c c c c c} \toprule
    \textbf{Number of steps} & \textbf{Val.} $\uparrow$ & \textbf{Val. \& Uni.} $\uparrow$ & \textbf{VUN} $\uparrow$& \textbf{Filters} $\uparrow$ & \textbf{FCD} $\downarrow$ & \textbf{SNN} $\uparrow$ & \textbf{Scaf} $\uparrow$\\ 
    \midrule
    \textbf{10} & 26.1$\spm{0.2}$ & 26.1$\spm{0.2}$ & 26.0$\spm{0.2}$ & 59.9$\spm{0.6}$ & 7.88$\spm{0.13}$ & 0.36$\spm{0.0}$ & 8.9$\spm{1.1}$ \\
    \textbf{50} & 82.9$\spm{0.3}$ & 82.9$\spm{0.3}$ & 80.5$\spm{0.3}$ & 94.6$\spm{0.1}$ & 1.54$\spm{0.01}$ & 0.49$\spm{0.0}$ & 18.4$\spm{1.0}$ \\
    \textbf{100} & 85.8$\spm{0.2}$ & 85.7$\spm{0.1}$ & 82.9$\spm{0.2}$ & 96.5$\spm{0.1}$ & \textbf{1.43}$\spm{0.01}$ & 0.5$\spm{0.0}$ & 17.2$\spm{0.6}$ \\
    \textbf{300} & 86.9$\spm{0.2}$ & 86.9$\spm{0.2}$ & 83.8$\spm{0.2}$ & 97.1$\spm{0.1}$ & 1.44$\spm{0.02}$ & \textbf{0.51}$\spm{.0}$  & \textbf{17.8}$\spm{1.0}$  \\
    \textbf{500} & 87.0$\spm{0.2}$ & 86.9$\spm{0.2}$ & 83.8$\spm{0.2}$ & \textbf{97.2}$\spm{0.1}$ & 1.44$\spm{0.02}$ & \textbf{0.51}$\spm{0.0}$ & 15.9$\spm{0.8}$ \\
    \textbf{700} & \textbf{87.2}$\spm{0.2}$ & 87.1$\spm{0.2}$ & 83.9$\spm{0.2}$ & \textbf{97.2}$\spm{0.1}$ & \textbf{1.43}$\spm{0.02}$ & \textbf{0.51}$\spm{0.0}$ & 15.9$\spm{0.4}$ \\
    \textbf{1000} & \textbf{87.2}$\spm{0.2}$ & \textbf{87.2}$\spm{0.2}$ & \textbf{84.0}$\spm{0.2}$ & \textbf{97.2}$\spm{0.1}$ & 1.44$\spm{0.01}$ & \textbf{0.51}$\spm{0.0}$ & 17.3$\spm{0.9}$ \\
    \bottomrule
    \end{tabular}}
\end{table}

\section{Limitations}\label{app:lim}
Although our model advances the state-of-the-art across all considered benchmarks, it still faces quadratic complexity, a common issue in graph diffusion models. This problem could be alleviated by adapting methods like EDGE~\citep{chen2023efficient} used to scale \textsc{DiGress} for large graph generation. Additionally, our approach does not support the generation of continuous features and is restricted to categorical attributes. To generate continuous features, it should be combined with a continuous-state diffusion model, resulting in an approach similar to \cite{vignac2023midi}.

\section{Samples}\label{sec:app_samp}

\begin{figure}[!htbp]
    \centering
    \includegraphics[scale=0.68]{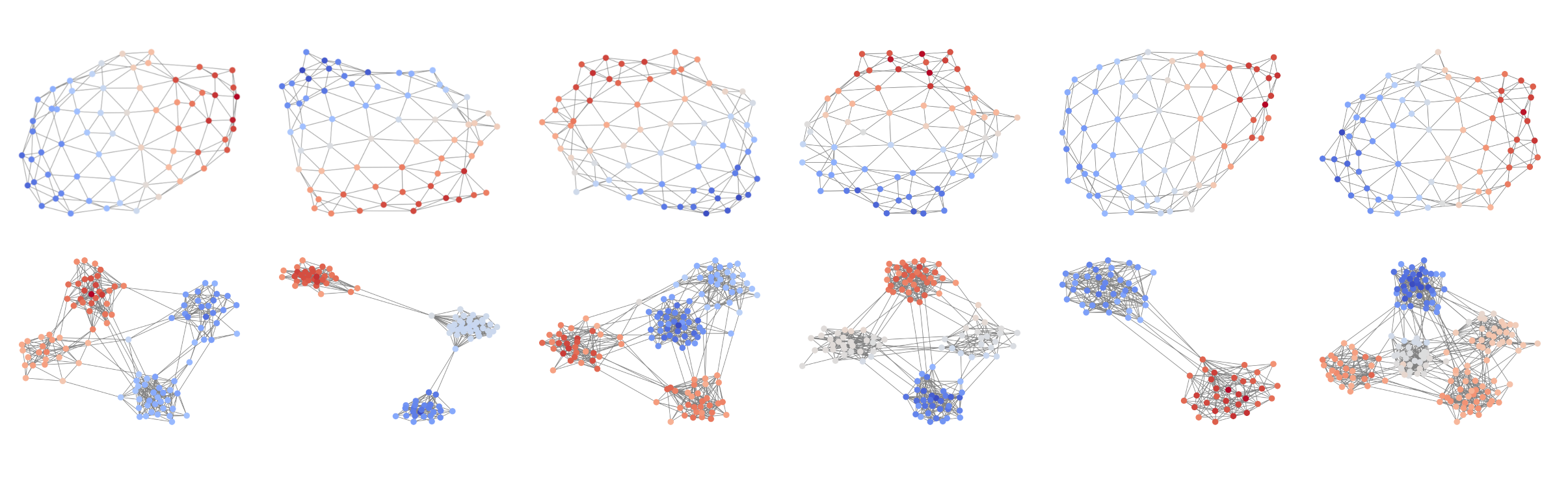}
    \caption{Samples from \textsc{Cometh} on \textsc{Planar} (top) and \textsc{SBM} (bottom)}
    \label{fig:synth}
\end{figure}

\begin{figure}[h!]
    \centering
    \includegraphics[scale=0.75]{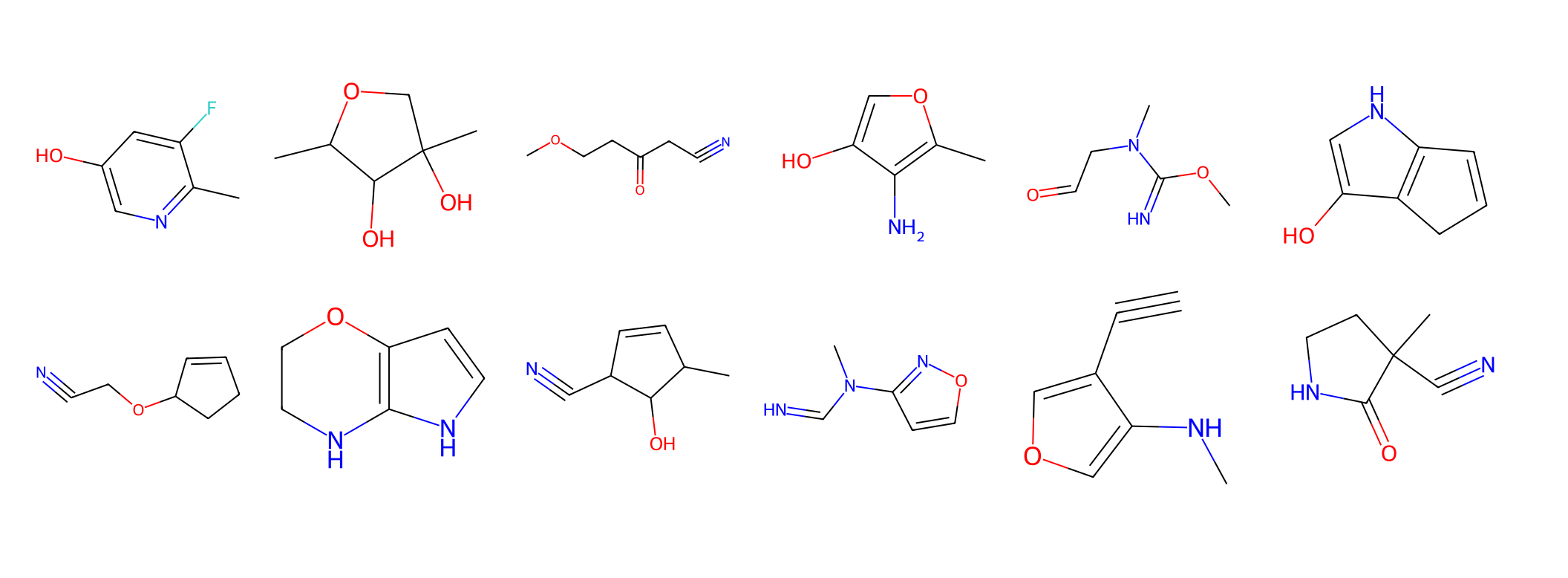}
    \caption{Samples from \textsc{Cometh} on QM9.}
    \label{fig:viz_qm9}
\end{figure}

\begin{figure}[h!]
    \centering
    \includegraphics[scale=0.8]{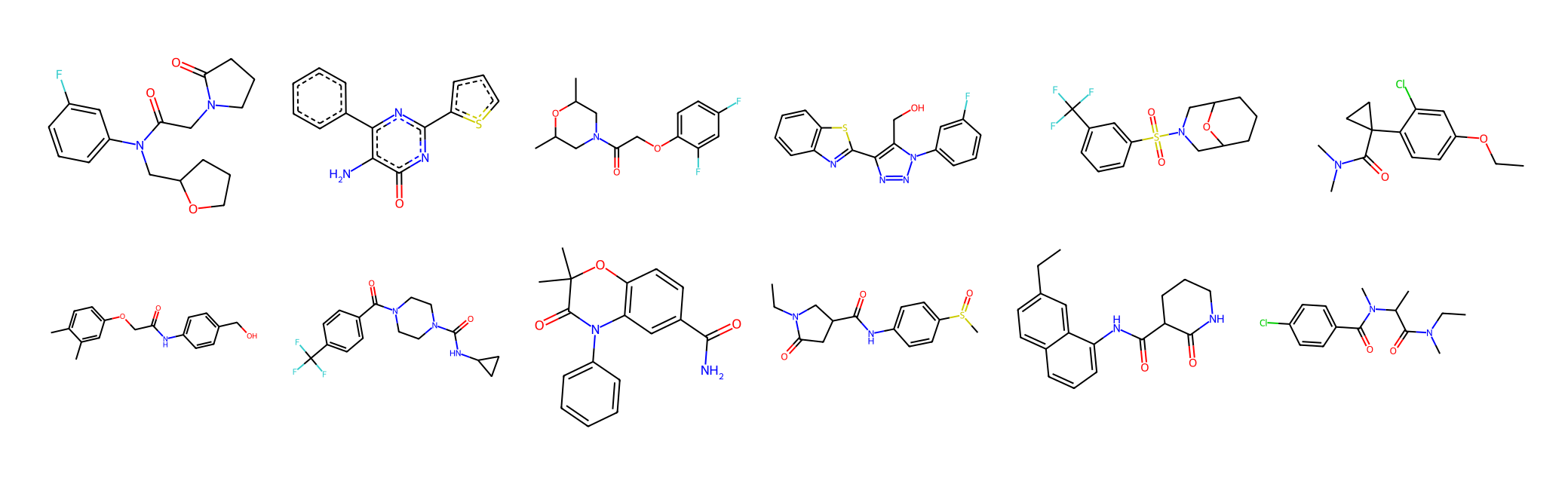}
    \caption{Samples from \textsc{Cometh} on MOSES.}
    \label{fig:viz_qm9}
\end{figure}

\begin{figure}[h!]
    \centering
    \includegraphics[scale=0.8]{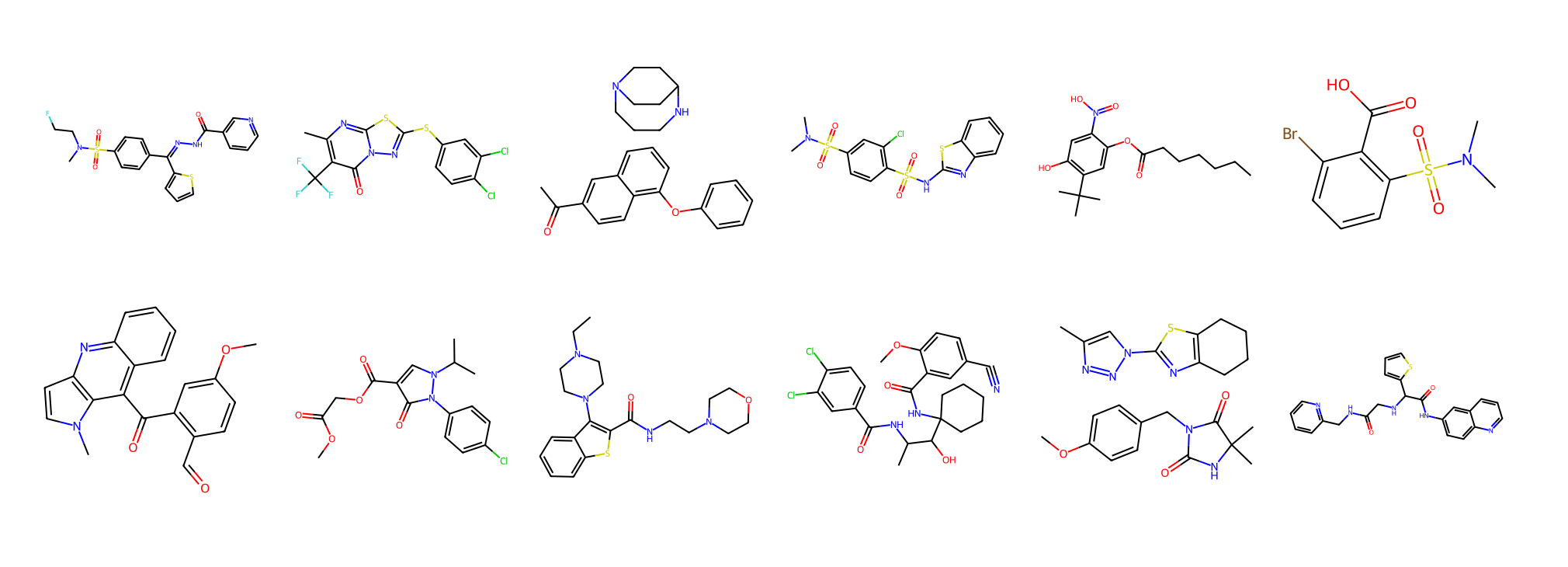}
    \caption{Samples from \textsc{Cometh} on GuacaMol. The samples on this dataset exhibit some failure cases, such as disconnected molecules or 3-cycles of carbon atoms.}
    \label{fig:viz_qm9}
\end{figure}

\end{document}